%% file: main.tex
\documentclass{article}

\usepackage[preprint]{neurips_2025}

\usepackage[utf8]{inputenc} %
\usepackage[T1]{fontenc}    %
\usepackage{hyperref}       %
\usepackage{url}            %
\usepackage{booktabs}       %
\usepackage{amsfonts}       %
\usepackage{nicefrac}       %
\usepackage{microtype}      %
\usepackage{xcolor}         %

\usepackage{algorithm}
\usepackage{algorithmic}
\newcommand{\RightComment}[1]{\hfill\algorithmiccomment{#1}}

\usepackage{amsmath}
\usepackage{amsfonts}
\usepackage{amssymb}
\usepackage{amsthm}
\usepackage{mathtools}
\usepackage{bm}
\usepackage{graphicx}
\usepackage{multirow}
\usepackage{wrapfig}
\usepackage{subfig}
\usepackage[export]{adjustbox}
\usepackage{tablefootnote}

\theoremstyle{plain}
\newtheorem{theorem}{Theorem}[section]
\newtheorem{proposition}[theorem]{Proposition}
\newtheorem{lemma}[theorem]{Lemma}
\newtheorem{coro}[theorem]{Corollary}
\theoremstyle{definition}

\newtheorem{assumption}[theorem]{Assumption}
\theoremstyle{remark}

\makeatletter
\renewenvironment{proof}[1][\proofname]{%
  \par\pushQED{\qed}%
  \noindent\textbf{#1.}\quad
}{%
  \popQED\par
}
\makeatother

\DeclareMathOperator{\argmin}{\arg\min}

\title{Autocorrelated Optimize-via-Estimate: \\ Predict-then-Optimize versus Finite-sample Optimal}

\date{}

\author{
  \textbf{Zichun Wang$^{1}$\thanks{Co-first authors}}, \,
  \textbf{Gar Goei Loke$^{2}$\footnotemark[1]},\,
  \textbf{Ruiting Zuo$^{1}$\thanks{Corresponding author}}\\
  $^1$The Hong Kong University of Science and Technology (Guangzhou)\\
  $^2$Durham University Business School
}

\begin{document}

\maketitle

\begin{abstract}
Models that directly optimize for out-of-sample performance in the finite-sample regime have emerged as a promising alternative to traditional estimate-then-optimize approaches in data-driven optimization. 
In this work, we compare their performance in the context of autocorrelated uncertainties, specifically, under a Vector Autoregressive Moving Average $\mathrm{VARMA}(p,q)$ process. 
We propose an autocorrelated Optimize-via-Estimate (A-OVE) model that obtains an out-of-sample optimal solution as a function of sufficient statistics, and propose a recursive form for computing its sufficient statistics. 
We evaluate these models on a portfolio optimization problem with trading costs. 
A-OVE achieves low regret relative to a perfect information oracle, outperforming predict-then-optimize machine learning benchmarks. 
Notably, machine learning models with higher accuracy can have poorer decision quality, echoing the growing literature in data-driven optimization. Performance is retained under small mis-specification.
\end{abstract}

\vspace{-7pt}
\input{chapters/1-intro}
\input{chapters/2-preliminaries}
\input{chapters/3-theory}

\input{chapters/4-experiments}

\input{chapters/5-conclusion}
{\small
\bibliographystyle{unsrtnat}
\bibliography{ref}
}
\appendix
\small
\newpage
\input{chapters/appendix}

\end{document}

%% file: chapters/1-intro.tex
\section{Introduction}
\label{sec:intro}
\subsection{Data-Driven Optimization}
Data-driven optimization is a specific but important instance of decision-making under uncertainty~\citep{Hertog_Postek_2016_bridging,mivsic2020data}, often formulated as the following stochastic program:
\begin{equation}\label{eq.nominal}
    \min_{\bm x\in \mathcal{B}} \; \rho(\bm{x}; D)
    := \min_{\bm x\in \mathcal{B}} \; \mathbb{E}_{\bm{y} \sim D} [\pi(\bm{x},\bm{y})],
\end{equation}
where decisions $\bm x$ in a feasible set $\mathcal{B}\subseteq \mathbb{R}^d$ optimize the expected value of a cost function $\pi(\bm x, \bm y)$, influenced by an uncertain variable $\bm y \in \mathcal{Y}\subseteq \mathbb{R}^n$ following true distribution $D$. In practice, true $D$ is unknown, instead a data set of $K$ historical observations sampled from $D$, denoted $\bm Y:= \{\bm y_i\}_{i=1}^K$, 
is available. A variant of this is the \emph{contextual} setting, where features (also called side information or covariates) related to unobserved uncertainty $\bm y$, denoted $\bm z$, are observed in a historic data set $\{(\bm z_i,\bm y_i)\}_{i=1}^K$ and also ahead of decision-making. This allows both (i) greater precision in estimating $\bm y$ via $\bm y | \bm z$, and (ii) decisions $\bm x := \bm x(\bm z)$ to be tailored to context $\bm z$.

A common approach is sequential learning and optimization: data set $\{(\bm z_i,\bm y_i)\}_{i=1}^K$ is used to learn relationship $\bm y|\bm z$, \emph{e.g.}, via machine learning, before an optimization model is solved using this relationship. There are two variants, namely predict-then-optimize (PTO) and estimate-then-optimize (ETO) \citep[e.g.,][]{ferreira2016analytics, liu2021time, Qi_Grigas_Shen_2021integrated}. PTO uses the learned relationship $\bm y | \bm z$ to form point estimate $\hat{\bm y}(\bm z)$ for the uncertainty, and optimizes \emph{deterministic} cost $\min_{\bm x} \pi(\bm{x},\hat{\bm y}(\bm z))$. Undoubtedly, this is biased vis-\`{a}-vis \eqref{eq.nominal} whenever $\pi$ is non-linear in $\bm y$ \citep{gupta2021small}. ETO avoids this by estimating the conditional distribution $\bm y|\bm z$ itself to optimize expected cost, for each context $\bm z$:
\begin{equation}\label{eq.contextual}
    \bm{x}(\bm z) \in \argmin\limits_{\bm x} \rho^C(\bm{x}) := \mathbb{E}_{\bm{y}|\bm z} [\pi(\bm{x},\bm{y})].
\end{equation}

Many learning schemes return only predictions, not distributions, as output and are inconsistent with \eqref{eq.contextual}. To solve this, \citet{bertsimas2020predictive} devised a method to generate a distribution from machine learning models for \eqref{eq.contextual}.

This two-stage paradigm is potentially suboptimal \citep{Liyanage_Shanthikumar_2005_practical, Mundru_2019predictive,hu2022fast}. Bafflingly, high prediction accuracy in the learning stage need not translate into good decision quality \citep{loke2022decision}; conversely, strong decisions can be attained using estimates with poor accuracy \citep{Elmachtoub2020smart}. This has motivated a growing body of literature on how to conduct learning and optimization \emph{in tandem}, termed integrated learning and optimization (ILO) \citep[see][for a comprehensive review]{Sadana_et_al_2024survey}.

The literature on ILO presents a wide range of approaches, from task-based learning \citep{Donti_Amos_Kolter_2017task} to robust optimization \citep{Kannan_Bayraksan_Luedtke_2024residuals, Sim_Tang_Zhou_Zhu_2024analytics}, from approaches that explicitly model the learning phase \citep{Elmachtoub2020smart} to those that do not \citep{Esteban_Morales_2022distributionally}. Most critically, while there have been some attempts to compare the performance of different models under a standardized setting \citep{tang2024pyepo}, there has yet to be conclusive comparisons across \emph{paradigms}, leading to an absence of theoretical explanations to what each paradigm is addressing and why they work.

At the same time, a recent stream of work on finite-sample optimality~\citep{feng2023framework,besbes2023big,LokeZhuZuo2023} has raised new questions in data-driven optimization. These works argue that decisions in data-driven optimization are functions of the data set, \emph{i.e.}, $\bm x(\bm Y)$, and therefore their performance should be evaluated in terms of \emph{out-of-sample} expectations taken over the sampling distribution, defined as follows: 
\begin{equation}\label{eq.oos}
    \sigma(\bm{x}(\cdot); D) = \mathbb{E}_{\bm Y} \left[\rho\left( \bm{x}(\bm Y); D \right) \right]. 
\end{equation}
Surprisingly, this out-of-sample objective can be directly and easily optimized! In numerical experiments, strong performance against state-of-the-art alternatives is observed. 

In this paper, we aim to study the potential of finite-sample optimal methods on a real-world problem, especially against machine learning models implemented in either PTO or ETO fashion, which may achieve lower prediction error. We focus on \citet{LokeZhuZuo2023}, as \citet{besbes2023big} is proposed specific to the newsvendor problem, whereas \citet{feng2023framework} restricts the allowed functional space for $\bm x(\cdot)$ to obtain solutions. Unfortunately, \citet{LokeZhuZuo2023} has yet been extended to the contextual setting. To mitigate this, we nominate the setting where uncertain $\bm y$ is a time-series. This allows the parametric approach of \citet{LokeZhuZuo2023} to be compared against machine learning methods, taking data of past times as covariates.

Specifically, we examine a portfolio optimization problem that accounts for trading costs. Trading costs are critical for practical portfolio implementation~\citep{kyle1985continuous, garleanu2016dynamic}, yet they are challenging to model directly as they depend on trade size and trader-specific characteristics. A common approach is to approximate trading costs using trading volume~\citep{frazzini2018trading, goyenko2024trading}, which exhibits strong temporal dependence to be modeled as a time series. Under this setting, \citet{goyenko2024trading} develop a portfolio optimization framework that balances tracking error against net-of-cost performance. They show that predicting trading volume via machine learning yields economic value comparable to stock return predictability, and their approach falls under the PTO paradigm. 

\subsection{Contributions and Findings}

Most primarily, we propose an autocorrelated model of \citet{LokeZhuZuo2023} (A-OVE) for the time-series setting, specifically, the vector autoregressive moving average ($\mathrm{VARMA}$) process. This extension is non-trivial due to the autocorrelated nature of the uncertainty. We then derive optimal solutions of A-OVE for a portfolio optimization problem with trading costs. In numerical experiments, we compare A-OVE against machine learning models under the PTO and ETO paradigms on both synthetic and real-world stock market datasets. Our main contributions and findings are:
\begin{enumerate}
    \item Theoretically, we derive the out-of-sample optimal solution for A-OVE as a function of sufficient statistics and propose a recursive form for computing the Fisher-Neyman decomposition of general $\mathrm{VARMA}(p,q)$ models, that has not been described in the literature to the best of our knowledge.
    \item A-OVE model achieves low regret relative to a perfect information oracle and consistently outperforms machine learning benchmarks---including neural networks, gradient boosting, and random forests---under both the PTO and ETO paradigms. These results are robust under a certain level of model mis-specification.
    \item Our numerical results further validate that machine learning models with higher predictive accuracy can yield poorer decision quality, reinforcing the importance of aligning learning objectives with downstream optimization tasks.
\end{enumerate}

%% file: chapters/2-preliminaries.tex
\section{Preliminaries and Notations}
\label{sec:preliminaries}
\subsection{Data-Driven Optimization Paradigms}

We present the data-driven optimization paradigms that we will examine in this paper.

\textbf{Predict-then-Optimize (PTO).} 
In the contextual setting, PTO learns a point predictor $\hat{\bm y}(\bm z)$ for the uncertain quantity $\bm y$ in each context $\bm z$ to optimizes for decisions $\bm x$ via
\begin{equation}\label{eq.pto}
    \bm x(\bm z) \in \argmin_{\bm x \in\mathcal{B}} \pi(\bm x, \hat{\bm y}(\bm z)). \tag{PTO}
\end{equation}
Even when predictor $\hat{\bm y}(\bm z) = \mathbb{E}[\bm y | \bm z]$ is unbiased, using $\pi(\bm x, \hat{\bm y}(\bm z))$ in place of $\mathbb{E}_{\bm{y}|\bm z} [\pi(\bm{x},\bm{y})]$ is biased in general if $\pi$ is non-linear in uncertainty $\bm y$. We denote PTO methods by their underlying predictive model. For example, PTO-RNN refers to a PTO model with predictors generated from a recurrent neural network.

\textbf{Estimate-then-Optimize (ETO).} 
ETO uses historic data to estimate the conditional distribution of the uncertainty given the context, denoted by $\hat D(\bm y | \bm z)$, and solves
\begin{equation}\label{eq.eto}
    \bm x(\bm z) \in \argmin_{\bm x \in\mathcal{B}} \mathbb{E}_{\bm{y}|\bm z \sim \hat{D}} [\pi(\bm{x}(\bm z),\bm{y})].  \tag{ETO}
\end{equation} 
We reserve the notation ``ETO'' to refer to the model that uses a parametric time-series learner for $\hat D(\bm y | \bm z)$. 

\textbf{From Predictive to Prescriptive (FPtP).}
Machine learning models cannot be directly applied within the ETO framework as they primarily output predictions, not conditional distributions. To bridge this, \citet{bertsimas2020predictive} propose the ``From Predictive to Prescriptive'' (FPtP) framework, which constructs a surrogate for the conditional expectation in \eqref{eq.eto}. This is done by generating $M$ predictive samples $\{\hat{\bm y}^{(m)}(\bm z)\}_{m=1}^M$ from the machine learning model and then assigning weights $\{w_m(\bm z)\}_{m=1}^M$, to solve
\begin{equation}\label{eq.fptp}
    \bm x(\bm z) \in \argmin_{\bm x \in\mathcal{B}} \sum\limits_{m=1}^M w_m(\bm z) \pi(\bm x, \hat{\bm y}_m(\bm z)). \tag{FPtP}
\end{equation}
How predictive samples and weights are generated is specific to the learner (implementation details for each learner are provided in Appendix~\ref{append.method_alg}. We denote FPtP models by their underlying learner. For instance, FPtP-RF denotes an FPtP approximation of ETO under a random forest learner. 

\textbf{Optimize-via-Estimate (OVE).} 
The OVE paradigm, proposed by \citet{LokeZhuZuo2023}, determines decision function $\bm x:= \bm x(\bm Y)$ of the data set $\bm Y$ by optimizing out-of-sample performance \eqref{eq.oos}. It adopts a parametric setting, assuming that $D$ belongs to a parametric family with density in $\mathcal{H}:=\{ f(\bm y; \bm \xi) : \bm \xi \in \Xi \subseteq \mathbb{R}^q \}$. As such, we write \eqref{eq.oos} as $\sigma(\bm{x}(\cdot); \bm \xi) = \mathbb{E}_{\bm Y} \left[\rho\left( \bm{x}(\bm Y); \bm \xi \right) \right]$, where the expectation is under the sampling distribution of $\bm Y$ with density denoted as $L\big(\bm Y; \bm \xi\big)$. At this point, $\sigma$ is ill-defined as $\bm \xi$ remains unknown. The OVE paradigm proposes a Bayesian-like approach to assume a prior $u(\bm \xi)$ for the unknown parameter $\bm \xi$, to optimize the posterior expected out-of-sample cost

\begin{equation}\label{eq.ove}
    \bm x \in \argmin\limits_{\bm x \in\mathcal{B}}\Psi(\bm{x}(\cdot); u):= \mathbb{E}_{\bm \xi \sim u} \left[\sigma\left( \bm{x}(\bm Y); \bm \xi \right) \right]. \tag{OVE}
\end{equation}

\begin{assumption}\label{asmp:convex_C}
    The feasible set $\mathcal B$ is convex. For all $\bm y\in\mathcal{Y}$, the cost function $\pi(\cdot,\bm y)$ is convex.
\end{assumption}

\begin{assumption}\label{asmp:strong_convexity}
    $\rho(\bm{x}; \bm \xi)$ is strictly convex and continuously differentiable for all $\bm \xi\in\Xi$, $\mathcal B$ is convex and closed, $\rho(\cdot; \bm \xi)$ is bounded below on $\mathcal B$ and $\lim\limits_{\|x\|\rightarrow\infty}\rho(\bm x; \bm \xi)=\infty$.
\end{assumption}

\begin{theorem}\label{thm.original_OVE}[\citet{LokeZhuZuo2023}]
    Let $\hat{\bm \xi} \in \hat{\Xi} \subseteq \mathbb{R}^r$ be any sufficient statistic for $\bm \xi$ with Fisher-Neyman factorization $L(\bm Y; \bm \xi) = g_0(\bm Y) g_1(\hat{\bm \xi}(\bm Y), \bm \xi)$ for non-negative $g_0$ and $g_1$. Let $\mathcal{X}(\hat{\bm \xi})$ be the set of feasible functions that are representable as a function of sufficient statistic $\hat{\bm \xi}$:
    \begin{equation}
        \mathcal{X}(\hat{\bm \xi}):= \left\{ \bm x \,\middle|\, \exists\, \tilde{\bm x},  \bm x(\bm Y) = \tilde{\bm x}(\hat{\bm \xi}(\bm Y)), \forall \bm Y \in \mathcal{Y}^K \right\}.
    \end{equation}
    Under Assumption \ref{asmp:convex_C}, if there is a re-parameterization $\mathcal{Y}^K \ni \bm Y \mapsto (\hat{\bm\xi},\bm Y|\hat{\bm\xi}) \in \hat\Xi\times\mathcal{Y}^K$ with non-negative Jacobian, then for fixed prior $u$,
    \begin{enumerate}\renewcommand{\labelenumi}{\roman{enumi}.}
        \item If an optimal solution for \eqref{eq.ove} exists, then one can be found in $\mathcal X(\hat{\bm\xi})$.
        \item The function $\bm x_{\scriptscriptstyle{\text{OVE}}}(\cdot)$, if it exists, defined pointwise on $\hat{\bm\xi}$, is optimal for \eqref{eq.ove}, i.e.,
        \begin{equation}\label{eq.ove_optimality}
            \bm x_{\scriptscriptstyle{\text{OVE}}}(\hat{\bm\xi}) \in \argmin_{\bm x\in\mathcal{B}} \beth(\bm x, \hat{\bm\xi}),
             \,\,\, \mbox{where} \,\,\,
             \beth(\bm x, \hat{\bm\xi}) = \mathbb{E}_{\bm \xi \sim u}\left[\rho(\bm x; \bm \xi) g_1(\hat{\bm\xi}, \bm \xi)\right].
        \end{equation}
        \item If furthermore Assumption \ref{asmp:strong_convexity} holds, then $\bm x_{\scriptscriptstyle{\text{OVE}}}(\cdot)$ exists and is uniquely optimal for \eqref{eq.ove}.
    \end{enumerate}
\end{theorem}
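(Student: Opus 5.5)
The plan is to rewrite the OVE objective as an integral over the data, swap the order of integration, and then decompose along the sufficient statistic.

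\emph{Step 1: interchange.} By Fubini--Tonelli (assuming $\rho$ is bounded below, or after shifting by a constant),
\[
\Psi(\bm x(\cdot);u)=\int_{\mathcal Y^K}\mathbb{E}_{\bm\xi\sim u}\bigl[\rho(\bm x(\bm Y);\bm\xi)\,L(\bm Y;\bm\xi)\bigr]\,d\bm Y .
\]
Substituting the factorization $L=g_0 g_1$ gives
\[
\Psi(\bm x(\cdot);u)=\int g_0(\bm Y)\,\mathbb{E}_{\bm\xi\sim u}\bigl[\rho(\bm x(\bm Y);\bm\xi)\,g_1(\hat{\bm\xi}(\bm Y),\bm\xi)\bigr]\,d\bm Y .
\]
The bracketed expectation is exactly $\beth(\bm x(\bm Y),\hat{\bm\xi}(\bm Y))$. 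Since $g_0\ge 0$, the objective is a non-negatively weighted integral of $\beth$.

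\emph{Step 2: pointwise minimization.} Because $\Psi$ is an integral of $g_0(\bm Y)\,\beth(\bm x(\bm Y),\hat{\bm\xi}(\bm Y))$ with no coupling across different $\bm Y$, any $\bm x(\cdot)$ satisfies
\[
\Psi(\bm x(\cdot);u)\ \ge\ \int g_0(\bm Y)\,\inf_{\bm x\in\mathcal B}\beth(\bm x,\hat{\bm\xi}(\bm Y))\,d\bm Y .
\]
The choice $\bm x(\bm Y)=\bm x_{\scriptscriptstyle{\text{OVE}}}(\hat{\bm\xi}(\bm Y))$ attains this bound. That proves (ii). It requires measurability of the selection, which I would handle via a measurable selection theorem; under (iii) it is automatic, since the minimizer is unique and continuous by Berge's theorem.

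For (i), suppose $\bm x^*$ is optimal. Then $\beth(\bm x^*(\bm Y),\hat{\bm\xi}(\bm Y))$ equals the pointwise infimum for $g_0$-almost every $\bm Y$. I would then use the re-parameterization $\bm Y\mapsto(\hat{\bm\xi},\bm Y|\hat{\bm\xi})$, whose Jacobian is non-negative, to write the integral iteratively as an outer integral over $\hat{\bm\xi}$ and an inner integral over the fiber $\{\bm Y:\hat{\bm\xi}(\bm Y)=\hat{\bm\xi}\}$ with weight $g_0$. On each fiber, replace $\bm x^*$ by its $g_0$-weighted fiber average $\tilde{\bm x}(\hat{\bm\xi})$. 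This average lies in $\mathcal B$ by convexity of $\mathcal B$. Because $\rho(\cdot;\bm\xi)$ inherits convexity from $\pi$ (Assumption~\ref{asmp:convex_C}) and $g_1\ge0$, the function $\beth(\cdot,\hat{\bm\xi})$ is convex. Jensen's inequality on the fiber then shows that $\tilde{\bm x}\in\mathcal X(\hat{\bm\xi})$ does no worse than $\bm x^*$, so it is also optimal. (If a fiber has zero $g_0$-mass, pick any $\bm x_{\scriptscriptstyle{\text{OVE}}}(\hat{\bm\xi})$ there.) This Rao--Blackwell-type averaging is where the Jacobian condition and convexity are genuinely needed.

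\emph{Step 3: existence and uniqueness.} Under Assumption~\ref{asmp:strong_convexity}, $\beth(\cdot,\hat{\bm\xi})$ is a non-negative mixture of strictly convex, coercive, continuous functions. It is therefore strictly convex, lower semicontinuous and coercive on the closed convex set $\mathcal B$ whenever the mixing weight $g_1(\hat{\bm\xi},\cdot)$ is not $u$-a.e.\ zero. Hence a minimizer exists and is unique. Uniqueness of the optimal decision function then follows: any other optimal $\bm x$ must attain the pointwise minimum $g_0$-a.e., and strict convexity forces it to equal $\bm x_{\scriptscriptstyle{\text{OVE}}}$.

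The main obstacle is the measure-theoretic bookkeeping in (i): making the fiber decomposition rigorous through the stated re-parameterization, and justifying the exchange of integrals when $\rho$ is unbounded. Integrability of $\rho\, g_1$ under $u$ has to be assumed or derived from finiteness of $\Psi$ at some feasible decision.
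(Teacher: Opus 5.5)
The paper gives no proof of its own here, only a pointer to \citet{LokeZhuZuo2023}. Your argument is essentially the one behind that result: Fubini plus the factorization writes $\Psi$ as $\int g_0(\bm Y)\,\beth(\bm x(\bm Y),\hat{\bm\xi}(\bm Y))\,d\bm Y$; part (i) follows by Rao--Blackwell-type fiber averaging with Jensen, where the averaging weight is really $g_0 J$ and the non-negative Jacobian enters exactly there; part (ii) is pointwise minimization of $\beth$; and part (iii) uses strict convexity with coercivity. It is correct up to the integrability and measurability caveats you already flag, with one small overstatement: Berge's theorem does not give continuity of $\bm x_{\scriptscriptstyle{\text{OVE}}}$ without continuity of $g_1$ in $\hat{\bm\xi}$, though measurability of the unique minimizer still follows from standard measurable-selection results for normal integrands.
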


Theorem \ref{thm.original_OVE} allows optimal decisions to be solved as a function of the sufficient statistics of family $\mathcal{H}$, explicitly via \eqref{eq.ove_optimality}, which is a convex optimization problem in $\bm x$ that can be efficiently solved.

\subsection{Problem Formulation under Vector Autoregressive Moving Average Process}\label{subsec.time_series}

Let $\bm Y := (\bm Y_t)_{t=1}^T$ denote a length-$T$ realization of a zero-mean multivariate time series $\{\bm{Y}_t\}$, where $\bm{Y}_t = (Y_{1,t}, \ldots, Y_{n,t})^\top \in \mathcal{Y} \subseteq \mathbb{R}^n$. Each component $Y_{i,t}$ exhibits dependence across both time $t$ and component $i$.
We model such autocorrelated uncertainty using a Vector Autoregressive Moving Average ($\mathrm{VARMA}$) process. Specifically, $\bm{Y}$ follows a zero-mean $\mathrm{VARMA}(p,q)$ process, i.e., for every $t$,
\begin{equation}
    \bm{Y}_t - \sum_{i=1}^p \Phi_i \bm{Y}_{t-i} = \bm{\epsilon}_t + \sum_{i=1}^q \Theta_i \bm{\epsilon}_{t-i},
\label{eq.varmapq}
\end{equation}
where $\{\bm{\epsilon}_t\}$ is an i.i.d.\ white-noise innovation sequence with $\bm{\epsilon}_t \sim \mathcal{N}(\mathbf{0}, \Sigma_\epsilon)$, and $\Phi_i, \Theta_i \in \mathbb{R}^{n \times n}$ are coefficient matrices. In the $\mathrm{VARMA}$ context, it is natural to assume causality and invertibility, which ensure the process admits a well-defined infinite-order representation.
\begin{assumption}[Causality and Invertibility]\label{assump: causal_invert}
    For all $z \in \mathbb{C}$ with $|z| \leq 1$, $\Phi(z)$ and $\Theta(z)$ satisfy
    \begin{equation}\label{eq.causal_invert}
        \det\Phi(z) \neq 0 \quad \text{ and } \quad \det\Theta(z) \neq 0,
    \end{equation}
    where $\Phi(z) := I_n - \sum_{i=1}^p \Phi_i z^i$ and $\Theta(z) := I_n + \sum_{i=1}^q \Theta_i z^i$.
\end{assumption} 

As such, the autocorrelated uncertainty is characterized by the coefficient matrices $\bm \Phi = \{\Phi_i\}_{i=1}^p$, $\bm\Theta = \{\Theta_i\}_{i=1}^q $, and the covariance matrix $\Sigma_\epsilon$. We denote these parameters collectively as $\bm \xi := (\bm\Phi,\bm\Theta, \Sigma_{\epsilon})$, which are unknown. Then this autocorrelated setting can be adapted to the OVE framework. We term our proposed model \emph{Autocorrelated OVE} (A-OVE), where the optimal out-of-sample solution can be obtained via \eqref{eq.ove_optimality} as a function of sufficient statistics of $\bm \xi$, following Theorem~\ref{thm.original_OVE}. In Section~\ref{sec:theory}, we will further discuss the sufficient statistics for $\mathrm{VARMA}(p,q)$, which are nontrivial to derive due to the autocorrelation. Notably, rather than computing the sufficient statistics explicitly, we show that the A-OVE solution can be efficiently obtained via the likelihood function.
\subsection{Notation}
We adopt the notation $[T] = \{1,\dots,T\}$ and, for $i < T$, $[i,T] = {i,\dots,T}$. $\mathrm{Diag}(\cdot)$ denotes the operator that constructs a diagonal matrix from a vector, while $\mathrm{diag}(\cdot)$ extracts the diagonal entries of a matrix. 

%% file: chapters/3-theory.tex
\section{Model and Theory} \label{sec:theory}

In this Section, we first discuss the sufficient statistics of the $\mathrm{VARMA}(p,q)$ model. Next, we set up the portfolio optimization model with trading cost and propose an autocorrelated OVE model for determining optimal portfolio decisions.

\subsection{Sufficient Statistics for $\mathrm{VARMA}$}

Deriving sufficient statistics for the general $\mathrm{VARMA}(p,q)$ model is nontrivial due to autocorrelation. To the best of our knowledge, they have not been described in the literature. Sufficient statistics for univariate $\mathrm{ARMA}$ processes were derived in \citet{esa1998sufficient}, while \citet{kharrati2009sufficient} derived explicit forms for the multivariate $\mathrm{VARMA}(1,1)$ setting under fixed parameters and proposed approximate sufficient statistics for higher-order $\mathrm{VARMA}$ processes.

Sufficient statistics are central to OVE as Theorem \ref{thm.original_OVE} indicates. They also appear in the definition of $g_1$ in $\beth$ in \eqref{eq.ove_optimality}. However, note that the sufficient statistics themselves do not need to be computed in order to solve \eqref{eq.ove_optimality}. First, the sufficient statistics $\hat{\bm \xi}:= \hat{\bm \xi}(\bm Y)$ are a function of data set $\bm Y$, thus the OVE solution $\bm x_{\scriptscriptstyle{\text{OVE}}}(\hat{\bm \xi}(\bm Y)) = \bm x_{\scriptscriptstyle{\text{OVE}}}(\bm Y)$ can be represented in terms of data set $\bm Y$ itself. Second and more critically, $g_1$ is implicitly $g_1(\bm Y, \bm \xi)$. Thus, as long as one can compute $g_1$ for a given data set $\bm Y$ and parameter $\bm \xi$, \eqref{eq.ove_optimality} can still be optimized. 

To derive the likelihood function of a $\mathrm{VARMA}$ process, the following structural assumption is often assumed.

\begin{assumption}[Symmetric Covariance]\label{assump:sym_cov}
    The process $\bm{Y}$ has symmetric covariance, namely, for all $z \in \mathbb{C}$ with $|z| = 1$,  $\Phi(\cdot)$ and $\Theta(\cdot)$ satisfy
    \begin{equation}
        \Phi^{-1}(z)\Theta(z)\Sigma_\epsilon\Theta^{\top}(z^{-1})\Phi^{\top-1}(z^{-1})
        = \Phi^{-1}(z^{-1})\Theta(z^{-1})\Sigma_\epsilon\Theta^{\top}(z)\Phi^{\top-1}(z).
    \end{equation}
\end{assumption}

Under this assumption, the process admits a time-reversed representation, which enables the construction of a forward-transformed process $\{\bm W_t\}$ and, in turn, an exact likelihood formulation. The following theorem summarizes the resulting Fisher-Neyman decomposition.

\begin{theorem}[Fisher-Neyman Decomposition for general $\mathrm{VARMA}(p,q)$]\label{thm.varma_pq}
    Consider a $\mathrm{VARMA}(p,q)$ process $\bm Y$ with parameters $\bm \xi = ( \bm \Phi, \bm \Theta, \Sigma_{\epsilon})$ satisfying Assumptions~\ref{assump: causal_invert} and~\ref{assump:sym_cov}. Then there exists a forward transformation $\{\bm W_t\}$ and an associated one-step predictor $\{\hat{\bm{W}}_t\}$ such that the exact likelihood of $\bm Y$ admits the Fisher–Neyman decomposition
    \begin{equation}\label{eq.lkh_pq}
        L(\bm{Y};\bm{\xi})=g_0(\bm{Y}) g_1(\bm{Y},\bm{\xi}),
    \end{equation}
    where $g_0(\bm{Y}) = (2 \pi) ^ {- \frac {n T}{2}}$ and
    \begin{equation}\label{eq.g1_pq}
            g_1(\bm{Y},\bm{\xi}) = \left(\prod_ {t = 1} ^ {T} \det  (\Sigma_t) \right) ^ {- \frac {1}{2}} \exp \left\{- \frac {h(\bm Y,\bm{\xi})}{2} \right\}.
    \end{equation}
    Here, $h(\bm Y,\bm{\xi}) = \sum_ {t = 1} ^ {T} (\bm{W}_{t}-\hat{\bm{W}}_t)^\top  \Sigma_t^{- 1} (\bm{W}_{t}-\hat{\bm{W}}_t)$,
    and the covariance matrices $\Sigma_t$ of prediction error $\bm{W}_{t}-\hat{\bm{W}}_t$ are available in Appendix~\ref{append.proofs}.
\end{theorem}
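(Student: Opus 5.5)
Our approach is the innovations algorithm applied to a transformed process, in the spirit of the Ansley/Brockwell--Davis treatment of exact ARMA likelihoods, extended to the vector case. The plan has three steps.

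\textbf{Step 1: build the forward transformation.} Since $\bm Y$ is Gaussian and zero-mean, $\bm Y=(\bm Y_1,\dots,\bm Y_T)$ is jointly normal with a block-Toeplitz covariance $\Gamma_T(\bm\xi)$. The Toeplitz structure follows from causality under Assumption~\ref{assump: causal_invert}, through the MA($\infty$) representation $\bm Y_t=\Phi^{-1}(B)\Theta(B)\bm\epsilon_t$. The direct Gaussian density in terms of $\Gamma_T$ is unwieldy, so we follow the Ansley-type construction. Set $m=\max(p,q)$ and define
\[
\bm W_t=\bm Y_t \text{ for } t\le m, \qquad \bm W_t=\bm Y_t-\sum_{i=1}^p\Phi_i\bm Y_{t-i} \text{ for } t>m.
\]
For $t>m$, $\bm W_t$ is then a pure vector MA($q$) in $\bm\epsilon$. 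The map $\bm Y\mapsto\bm W$ is block lower-triangular with identity diagonal blocks, so its Jacobian equals $1$ and the density of $\bm Y$ equals the density of $\bm W$ evaluated at the transformed data.

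Assumption~\ref{assump:sym_cov} enters at this step. It makes the spectral density satisfy $f(z)=f(z^{-1})$, so the autocovariances obey $\Gamma(h)=\Gamma(-h)=\Gamma(h)^\top$. This is exactly what is needed for the time-reversed representation, and it lets the initial-block covariances $\mathrm{Cov}(\bm W_s,\bm W_t)$ for $s,t\le m$, together with the cross terms with $t>m$, be written in closed form in terms of $\Gamma(\cdot)$, $\bm\Theta$ and $\Sigma_\epsilon$. Without the symmetry, the multivariate reversed process would carry different coefficients, and the covariance $\kappa(s,t)$ of $\bm W$ would not be expressible this way.

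\textbf{Step 2: compute the predictors recursively.} Since $\kappa(s,t)=0$ for $|s-t|>q$ and $s,t>m$, the multivariate innovations algorithm applies. It gives
\[
\hat{\bm W}_{t+1}=\sum_{j=1}^{t}\Theta_{t,j}(\bm W_{t+1-j}-\hat{\bm W}_{t+1-j}), \qquad \Sigma_{t+1}=\kappa(t+1,t+1)-\sum_j\Theta_{t,t-j}\Sigma_{j+1}\Theta_{t,t-j}^\top,
\]
with $\Theta_{t,t-k}=\bigl(\kappa(t+1,k+1)-\sum_{j<k}\Theta_{t,t-j}\Sigma_{j+1}\Theta_{k,k-j}^\top\bigr)\Sigma_{k+1}^{-1}$. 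For $t\ge m$ these sums truncate at $q$ terms, which makes the recursion efficient. The $\Sigma_t$ produced here are the matrices to be reported in the appendix.

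\textbf{Step 3: factorize the likelihood.} The innovations $\bm W_t-\hat{\bm W}_t$ are uncorrelated, Gaussian and hence independent, with covariances $\Sigma_t$. The innovations map $\bm W\mapsto(\bm W_t-\hat{\bm W}_t)_t$ is again unit lower-triangular, so the joint density factorizes into a product of $T$ Gaussian densities. This yields $(2\pi)^{-nT/2}\prod_t\det(\Sigma_t)^{-1/2}\exp\{-h/2\}$. We then set $g_0=(2\pi)^{-nT/2}$ and let $g_1$ collect every factor that depends on $\bm\xi$. Because $g_1$ depends on the data only through $\bm Y$ itself, sufficiency is immediate, and this matches the implicit form $g_1(\bm Y,\bm\xi)$ discussed before the theorem.

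The main obstacle is Step 1: deriving explicit, well-defined formulas for $\kappa(s,t)$ in the initial block and the cross block in the multivariate case. Matrices do not commute, so the univariate reversal argument fails unless Assumption~\ref{assump:sym_cov} is invoked carefully. We also need positive definiteness of each $\Sigma_t$, which follows from $\Sigma_\epsilon\succ0$ and invertibility.
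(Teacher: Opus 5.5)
Your proof is correct, but it takes a different route from the paper's. You use the classical Ansley/Brockwell--Davis transform $\bm W_t=\bm Y_t-\sum_i\Phi_i\bm Y_{t-i}$ for $t>m$, which is what the paper calls the \emph{backward}-transformed process, and you run the innovations algorithm in natural time order.

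The paper instead invokes Assumption~\ref{assump:sym_cov} to obtain the time-reversed representation $\bm Y_t-\sum_i\Phi_i\bm Y_{t+i}=\tilde{\bm\epsilon}_t+\sum_i\Theta_i\tilde{\bm\epsilon}_{t+i}$ with the same $(\bm\Phi,\bm\Theta,\Sigma_\epsilon)$. It sets $\bm W_t=\bm Y_t-\sum_i\Phi_i\bm Y_{t+i}$ for $t\le T-l$ and leaves the last $l$ observations untransformed. It then computes $\Gamma_{\bm W}$ in Lemma~\ref{lemma:Gamma_W} and argues as in your Steps~2--3.

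Both constructions are determinant-one linear maps $\bm Y\mapsto(\bm W_t-\hat{\bm W}_t)_t$ that block-diagonalize your $\Gamma_T$. So both give $\prod_t\det\Sigma_t=\det\Gamma_T$ and $h=\bm Y^\top\Gamma_T^{-1}\bm Y$ for the stacked data. Hence $g_1$ is the same function of $(\bm Y,\bm\xi)$. Only the intermediate $\bm W_t$, $\hat{\bm W}_t$, $\Sigma_t$ differ from the ones the statement refers to.

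One correction to your commentary: on your route, Assumption~\ref{assump:sym_cov} is never used. Your $\bm W_t$ for $t>m$ is a VMA($q$) in the original $\bm\epsilon_t$. The initial block is $\kappa(s,t)=\Gamma(s-t)$ and the cross block is $\Gamma(s-t)-\sum_i\Phi_i\Gamma(s-t-i)$. These need only causality and $\Gamma(-h)=\Gamma(h)^\top$. No time reversal occurs anywhere, so the ``main obstacle'' you identify does not arise. Likewise, nonsingularity of the $\Sigma_t$ follows from $\Sigma_\epsilon\succ0$ and causality alone; invertibility is not needed. Your argument is therefore more general than the paper's.

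What the paper's forward choice buys is structure. Because the $\Gamma_{\bm Y}$-dependent block comes last, every recursion quantity in the first $T-l$ steps depends only on $(\bm\Theta,\Sigma_\epsilon)$. In those steps $\bm\Phi$ enters the innovations only affinely. This produces the explicit forms in Appendix~\ref{append.proofs_varma11} and Proposition~\ref{prop.sym_xi} (e.g.\ $\Sigma_t=s_t\Sigma_\epsilon$), which are used to evaluate $g_1$ cheaply over many prior draws. In your ordering, $\Gamma(\cdot)$, and with it $\bm\Phi$, enters at the very first step and propagates into every $\Sigma_t$. (Running your recursion in reverse time order would recover the paper's structure without the symmetry assumption.)
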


\subsection{Portfolio Optimization with Trading Cost}
As motivated in the Introduction, we aim to compare different models on a portfolio optimization problem that balances risk-adjusted returns against trading costs. 

\textbf{Problem definition.} Consider a universe of $n$ assets and a fund manager with assets, valuing at $A$, at the start. At time $T+1$, the manager decides on a portfolio position $\bm x = (x_1, \ldots, x_n)^\top \in \mathbb{R}^n$ after observing historical information $\bm Y := (\bm Y_t)_{t=1}^T$ up to time $T$. The objective is to maximize
\begin{equation}\label{eq.utility}
     A(1+r^f) + \bm{e}^\top\bm{x} - \frac{\gamma}{2A}\bm{x}^\top\Sigma\bm{x}  - c(\bm{x},\bm{Y}_{T+1}),
\end{equation}
where the four terms respectively correspond to:  (i) the risk-free return at rate $r^f$, (ii) the expected excess returns at excess return rate $\bm{e}$, (iii) a mean–variance risk penalty with risk-aversion coefficient $\gamma$ and return covariance matrix $\Sigma$, and (iv) a trading cost term $c(\bm x, \bm Y_{T+1})$, which depends on both the portfolio decision and next-period log dollar trading volume $\bm Y_{T+1}$. Here, short-selling is permitted so portfolio decisions $\bm x$ are unconstrained ($\bm x < 0$ is permitted).

We specifically follow the setting of \citet{goyenko2024trading}, where uncertainty is assumed in the trading volume $\bm Y$ rather than the excess returns $\bm e$. Here, trading costs are modeled in a quadratic form: 
\begin{equation}
    c(\bm{x},\bm{Y}_{T+1}):= \frac{1}{2}(\bm{x}-\bm{x}^0)^\top\Lambda(\bm{Y}_{T+1})(\bm{x}-\bm{x}^0),
\end{equation}
where $\bm x^0$ denotes the initial portfolio position. The cost matrix $\Lambda(\bm{Y}_{T+1}):=\bar{\lambda} \mathrm{Diag}(\exp\{-\bm{Y}_{T+1}\})$ captures the inverse relationship between trading volume and transaction costs, with a scaling parameter $\bar{\lambda} > 0$. 

\textbf{Solving the models.}
Before detailing the solutions for each model, for analytical simplicity, we further assume $\Sigma = \delta^2 I_n$. Under this assumption, maximizing \eqref{eq.utility} is equivalent to solving the minimization problem
\begin{equation}\label{eq.pi}
    \min_{\bm{x}} \pi(\bm{x}; \bm{Y}_{T+1}) 
    := ( \bm{x}-\mu_0\bm{e} )^\top D_1 ( \bm{x}-\mu_0\bm{e} ) 
    + ( \bm{x}-\bm{x}^0 )^\top D_2(\bm{Y}_{T+1}) ( \bm{x}-\bm{x}^0 ),
\end{equation}
where $D_1 = \mu_1 \bm I_n$ and 
\begin{equation}\label{eq.D2}
    D_2(\bm{Y}_{T+1}) = \mu_2 \mathrm{Diag}(\exp\{-\bm{Y}_{T+1}\})
\end{equation}
with $\mu_0 = \frac{A}{\gamma\delta^2}$, $\mu_1=\frac{\gamma \delta^2}{2A}$ and $\mu_2 = \frac{\bar{\lambda} }{2}$.

\textbf{PTO.}
Given point prediction $\hat{\bm{Y}}_{T+1}:= \hat{\bm{Y}}_{T+1}(\bm Y)$, the optimal decision is
\begin{equation}\label{eq.x_pto}
    \bm x_{\scriptscriptstyle{\text{PTO}}} = \left( D_1 + D_2^{\scriptscriptstyle{\text{PTO}}} \right)^{-1} \left( \mu_0 D_1 \bm e + D_2^{\scriptscriptstyle{\text{PTO}}} \bm x^0 \right),
\end{equation}
where $D_2^{\scriptscriptstyle{\text{PTO}}} = D_2(\hat{\bm{Y}}_{T+1})$.

\textbf{ETO.} From \eqref{eq.pi}, the expected cost simplifies to
\begin{equation}\label{eq.rho}
        \rho(\bm x; \bm{\xi}) 
        = \left( \bm{x}-\mu_0\bm{e} \right)^\top D_1 \left( \bm{x}-\mu_0\bm{e} \right)
        + \left( \bm{x}-\bm{x}^0 \right)^\top \mathbb{E}_{\bm{Y_{T+1}}|\bm{\xi}} [D_2(\bm{Y}_{T+1})] \left( \bm{x}-\bm{x}^0 \right).
\end{equation}
Hence for a given set of estimated parameters $\hat{\bm{\xi}}_{\scriptscriptstyle{\text{ETO}}}:= (\hat{\Phi}, \hat{\Theta}, \hat{\Sigma}_{\epsilon})$, the optimal decision is
\begin{equation}\label{eq.x_eto}
    x_{\scriptscriptstyle{\text{ETO}}} 
    = \left( D_1 + D_2^{\scriptscriptstyle{\text{ETO}}} \right)^{-1} \left( \mu_0 D_1 \bm e + D_2^{\scriptscriptstyle{\text{ETO}}} \bm x^0 \right),
\end{equation}
where $D_2^{\scriptscriptstyle{\text{ETO}}} = \mathbb{E}_{\bm{Y}_{T+1}|\hat{\bm{\xi}}_{\scriptscriptstyle{\text{ETO}}}} [D_2(\bm{Y}_{T+1})]$. Moreover, note that $\bm{Y}_t \sim N(\bm{0}, \Gamma_{\bm{Y}_t|\bm{\xi}}(0))$ for all times $t\in[T+1]$. Denote $\bm{\gamma}^2(\bm{\xi}) = \mathrm{diag}\left(\Gamma_{\bm{Y}_{T+1}|\bm{\xi}}(0)\right)$. It follows that
\begin{equation}\label{eq.expected_D2}
    \mathbb{E}_{\bm{Y}_{T+1}|\bm{\xi}} [D_2(\bm{Y}_{T+1})] = \mu_2 \mathrm{Diag}\left(\exp\left\{\frac{\bm{\gamma}^2(\bm{\xi})}{2}\right\}\right).
\end{equation}
Details on estimating $\Gamma_{\bm Y_{T+1}}(0)$ are provided in Appendix~\ref{append.gamma_y}.

\textbf{FPtP-based ETO.}
The FPtP approach generates $M$ predictive samples $\hat{\bm{Y}}_{T+1} := (\hat{\bm{Y}}_{T+1}^{(m)})_{m=1}^M$ and weights $\bm w := (w_m)_{m=1}^M$, leading to optimal decisions
\begin{equation}\label{eq.x_fptp}
    \bm x_{\scriptscriptstyle{\text{FPtP}}} 
    = \left( D_1 + D_2^{\scriptscriptstyle{\text{FPtP}}} \right)^{-1} \left( \mu_0 D_1 \bm e + D_2^{\scriptscriptstyle{\text{FPtP}}} \bm x^0 \right),
\end{equation}
where $D_2^{\scriptscriptstyle{\text{FPtP}}} = \sum_{m=1}^M w_m D_2(\hat{\bm{Y}}_{T+1}^{(m)})$.

\textbf{A-OVE.} We propose an OVE model for the portfolio optimization problem under a $\mathrm{VARMA}(p,q)$ model specified by parameter $\bm \xi = (\bm \Phi, \bm \Theta, \Sigma_\epsilon)$. Under Assumptions~\ref{assump: causal_invert} and~\ref{assump:sym_cov}, we can apply Theorem \ref{thm.original_OVE}. 

\begin{coro}\label{coro.ove}
For any prior distribution $u \sim \bm{\xi}$, the unique optimizer of \eqref{eq.ove} has closed form solution
\begin{equation}\label{eq.x_ove}
    \bm x_{\scriptscriptstyle{\text{OVE}}}(\hat{\bm \xi}) = \left(\bar{D}_1^{\scriptscriptstyle{\text{OVE}}} + \bar{D}_2^{\scriptscriptstyle{\text{OVE}}} \right)^{-1}\left(\mu_0 \bar{D}_1^{\scriptscriptstyle{\text{OVE}}}\bm{e} + \bar{D}_2^{\scriptscriptstyle{\text{OVE}}} \bm{x}^0 \right), \tag{A-OVE}
\end{equation}
where
\begin{equation}\label{eq.D_ove}
     \bar{D}_1^{\scriptscriptstyle{\text{OVE}}} = c_1(\hat{\bm \xi}; u) D_1
     \quad \text{and} \quad
     \bar{D}_2^{\scriptscriptstyle{\text{OVE}}} =  C_2(\hat{\bm \xi}; u),
\end{equation}
with 
\begin{equation}\label{eq.C_ove}
    \begin{aligned}
        & c_1(\hat{\bm \xi}; u) :=  \int u(\bm \xi) g_1 ( \hat{\bm \xi} , \bm \xi ) d\bm{\xi} \\
        & C_2(\hat{\bm \xi}; u) := \int u(\bm \xi)\mathbb{E}_{\bm{Y}_{T+1}|\bm{\xi}} [D_2(\bm{Y}_{T+1})]  g_1 ( \hat{\bm \xi} , \bm \xi) d\bm \xi.
    \end{aligned}
\end{equation}
\end{coro}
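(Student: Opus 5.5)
We will derive the corollary from Theorem~\ref{thm.original_OVE}, parts (ii) and (iii), using the Fisher--Neyman factorization of Theorem~\ref{thm.varma_pq}. Assumptions~\ref{assump: causal_invert} and~\ref{assump:sym_cov} give $L(\bm Y;\bm\xi)=g_0(\bm Y)g_1(\bm Y,\bm\xi)$ with $g_0$ constant. Following the discussion before Theorem~\ref{thm.varma_pq}, we take the data set itself, or any statistic through which $g_1$ factors, as the sufficient statistic $\hat{\bm\xi}$. The identity map is a trivially valid re-parameterization, so the Jacobian condition holds.

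\textbf{Step 1: assumptions.} For every $\bm y$, $\pi(\cdot;\bm y)$ in \eqref{eq.pi} is a sum of two convex quadratics in $\bm x$, since $D_1=\mu_1 I_n\succ 0$ and $D_2(\bm y)\succeq 0$. The feasible set $\mathcal B=\mathbb R^n$ is convex and closed, so Assumption~\ref{asmp:convex_C} holds. For Assumption~\ref{asmp:strong_convexity}, we use \eqref{eq.rho} and \eqref{eq.expected_D2}. The function $\rho(\cdot;\bm\xi)$ is a quadratic with Hessian $2(D_1+\mathbb E[D_2])\succeq 2\mu_1 I_n\succ0$. It is therefore strictly convex, $C^1$, bounded below, and coercive. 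Part (iii) of Theorem~\ref{thm.original_OVE} then gives existence and uniqueness of $\bm x_{\text{OVE}}$.

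\textbf{Step 2: compute $\beth$.} Substituting \eqref{eq.rho} into \eqref{eq.ove_optimality} and exchanging the integral with the finite-dimensional quadratic form gives
\begin{equation*}
\beth(\bm x,\hat{\bm\xi})=(\bm x-\mu_0\bm e)^\top c_1 D_1(\bm x-\mu_0\bm e)+(\bm x-\bm x^0)^\top C_2(\bm x-\bm x^0).
\end{equation*}
Here $c_1$ and $C_2$ are as in \eqref{eq.C_ove}. The exchange is linearity of the integral, applied entrywise, provided the integrals are finite. We will assume finiteness, or note that it holds whenever $\int u\,g_1\,\mathbb E[D_2]\,d\bm\xi<\infty$. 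We also need $c_1>0$. This holds because $g_1>0$ by \eqref{eq.g1_pq} and $u$ is a probability density. Moreover, $C_2$ is a diagonal matrix with nonnegative entries, since it is an integral of positive diagonal matrices.

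\textbf{Step 3: first-order condition.} $\beth(\cdot,\hat{\bm\xi})$ is a strictly convex quadratic with Hessian $2(c_1D_1+C_2)\succ0$. Setting the gradient to zero gives
\begin{equation*}
c_1D_1(\bm x-\mu_0\bm e)+C_2(\bm x-\bm x^0)=0.
\end{equation*}
Because $\bar D_1^{\text{OVE}}+\bar D_2^{\text{OVE}}$ is invertible, solving this yields \eqref{eq.x_ove}. Uniqueness follows from strict convexity together with part (iii).

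\textbf{Main obstacle.} The delicate point is verifying the hypotheses of Theorem~\ref{thm.original_OVE} rather than the algebra. Two things need justification. First, $g_1$ must genuinely depend on $\bm Y$ only through a sufficient statistic, which holds trivially by taking $\hat{\bm\xi}=\bm Y$. Second, the integrals defining $c_1$ and $C_2$ must be finite for the prior $u$, which is needed so that $\beth$ is well-defined. I would handle the second point by observing that $\exp\{\bm\gamma^2(\bm\xi)/2\}$ may be unbounded, and then stating the corollary under the integrability condition implicit in ``if it exists''.
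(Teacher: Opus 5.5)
Your proposal is correct and takes the same route as the paper, which gives no separate proof of the corollary: apply Theorem~\ref{thm.original_OVE} with the factorization of Theorem~\ref{thm.varma_pq} (evaluating $g_1$ directly as $g_1(\bm Y,\bm\xi)$, as the paper remarks), substitute the quadratic $\rho$ from \eqref{eq.rho} into $\beth$ to get the weighted coefficients $c_1D_1$ and $C_2$, and solve the first-order condition of the resulting strictly convex quadratic. Your check of the convexity assumptions and your integrability caveat for $C_2$, which can diverge under priors with mass near the causality boundary, are points the paper leaves implicit, and they only strengthen the argument.
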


Note that $\mathbb{E}_{\bm{Y}_{T+1}|\bm{\xi}} [D_2(\bm{Y}_{T+1})]$ can be computed for every $\bm \xi$ via \eqref{eq.expected_D2}. 
The quantities $c_1$ and $C_2$ can be approximated via Monte Carlo sampling from prior $u$, while the weighting function $g_1$ is available from \eqref{eq.g1_pq}. Algorithm~\ref{alg:ove} presents a pseudocode for solving \eqref{eq.x_ove}; implementation details for PTO, ETO, and FPtP are provided in Appendix~\ref{append.method_alg}. We note that the proposed algorithm for solving A-OVE can be extended to other optimization settings under $\mathrm{VARMA}(p,q)$ processes.

\input{algorithm/ove}

%% file: algorithm/ove.tex
\begin{algorithm}
\small
\caption{\textsc{MethodSolve}: A-OVE}
\label{alg:ove}
\begin{algorithmic}[1]
\REQUIRE
prior $u$;
number of sampled parameters of $\bm \xi$, $N_{\text{ove}}$;
sample length $T$;
sample $\bm{Y}$;
constants $\mu_0, D_1,\bm{e},\bm{x}^0$

\vspace{2mm}
\STATE Generate parameters $\bm{\xi}_i \sim u$, for $i=1,...,N_{\text{ove}}$

\FOR{$i=1$ {\bfseries to} $N_{\text{ove}}$} 
    \STATE $D_2^i  \gets \mathbb{E}_{\bm{Y}_{T+1}|\bm{\xi}_i} [D_2(\bm{Y}_{T+1})]$ 
    \RightComment{via \eqref{eq.expected_D2}}
    \STATE $g_1^i \gets g_1(\bm{Y},\bm{\xi}_i)$
    \RightComment{via \eqref{eq.g1_pq}}
    \STATE $C_2^i \gets D_2^i\cdot g_1^i$
\ENDFOR

\STATE $c_1 \gets \frac{1}{N_{\text{ove}}}\sum_{i=1}^{N_{\text{ove}}} g_1^i$
\RightComment{approximate $c_1$ in \eqref{eq.C_ove}}
\STATE $C_2 \gets \frac{1}{N_{\text{ove}}}\sum_{i=1}^{N_{\text{ove}}} C_2^i$
\RightComment{approximate $C_2$ in \eqref{eq.C_ove}}
\STATE $\bar{D}_1^{\scriptscriptstyle{\text{OVE}}} \gets c_1 D_1$
\STATE $\bar{D}_2^{\scriptscriptstyle{\text{OVE}}} \gets  C_2$
\STATE $\bm{x}_{\scriptscriptstyle{\text{OVE}}} \gets \left(\bar{D}_1^{\scriptscriptstyle{\text{OVE}}} + \bar{D}_2^{\scriptscriptstyle{\text{OVE}}} \right)^{-1}\left(\mu_0 \bar{D}_1^{\scriptscriptstyle{\text{OVE}}}\bm{e} + \bar{D}_2^{\scriptscriptstyle{\text{OVE}}} \bm{x}^0 \right)$

\vspace{2mm}
\STATE {\bfseries Return } $\bm{x}_{\scriptscriptstyle{\text{OVE}}}$
\end{algorithmic}
\end{algorithm}

%% file: chapters/4-experiments.tex
\section{Numerical Experiments}
\label{sec:experiments}

We evaluate the proposed A-OVE model against several benchmark methods on the portfolio optimization problem~\eqref{eq.rho}. In the first instance, we test the models under synthetic data, where the data-generating process is controlled. In the next subsection, we use real-world financial data, where modeling assumptions are only approximately satisfied. For simplicity, we assume $\mathrm{VARMA}(1,1)$ in our experiments. 

\textbf{Benchmark methods.} 
We consider four PTO models: PTO-RNN (recurrent neural network), PTO-LSTM (long short-term memory network), PTO-RF (random forest), and PTO-XGB (XGBoost). We additionally implement ETO variants under the FPtP implementation for tree-based models, FPtP-RF and FPtP-XGB. We also include a parametric ETO baseline that estimates a $\mathrm{VARMA}$ model via maximum likelihood. Among all methods, only A-OVE and parametric ETO explicitly exploit the assumed $\mathrm{VARMA}$ structure. For all machine learning models, they are allowed to use data of the past time periods of up to $S = 10 > p$ periods as covariates. A perfect-information oracle, which knows true $\bm \xi$, serves as a benchmark for computing relative regret.

\subsection{Numerical Results on Synthetic Data}

\textbf{Experimental design.}
We consider two settings. In the \emph{well-specified} setting, data are generated from a $\mathrm{VARMA}(1,1)$ model consistent with the assumptions used by A-OVE and ETO. In the \emph{misspecified} setting, data are generated from a higher-order $\mathrm{VARMA}(p,q)$ process, while A-OVE and ETO continue to assume a $\mathrm{VARMA}(1,1)$ specification.

To assess performance across a range of parameters, we draw $N_o$ oracle parameter vectors $\bm \xi_o$ independently from a prior distribution over the parameter space. For each oracle parameter, we generate $N_s$ independent time-series samples $\bm Y^o_i$ of length $T$. Each model generates a portfolio decision $\bm x^o_i(\bm Y^o_i)$ per sample, which is evaluated against the oracle's decision $\bm x^{o}_*$. Specifically, we compute the relative regret, $r^o_i = (\rho(\bm x^o_i(\bm Y^o_i); \bm \xi_o)-\rho(\bm x^{o}_*; \bm \xi_o))/\rho(\bm x^{o}_*; \bm \xi_o)$. The average relative regret for a given true parameter $\bm \xi_o$ can then be averaged across data sets: $R_o = \frac{1}{N_s} \sum_{i=1}^{N_s} r^o_i$. For each oracle, we approximate out-of-sample performance by averaging realized costs across samples, yielding an empirical estimate $\hat{\sigma}^o_{\text{syn}}$. This is compared against the oracle benchmark cost $\rho_*^o$, defined as the optimal cost under perfect information. Overall performance is summarized by the average relative regret across oracle parameters. Algorithm~\ref{alg:synthetic} provides a detailed description of the evaluation procedure.

\input{algorithm/synthetic}

In addition to decision performance, we evaluate predictive accuracy using the mean squared error (MSE), computed via a rolling evaluation scheme.
For further details on the simulation setup and experimental results, please refer to Appendix~\ref{append.exp.syn}.

\input{tables/well}

\begin{figure}[ht]
	\centering
	\includegraphics[trim=2mm 2mm 2mm 2mm, clip, width=0.7\linewidth]{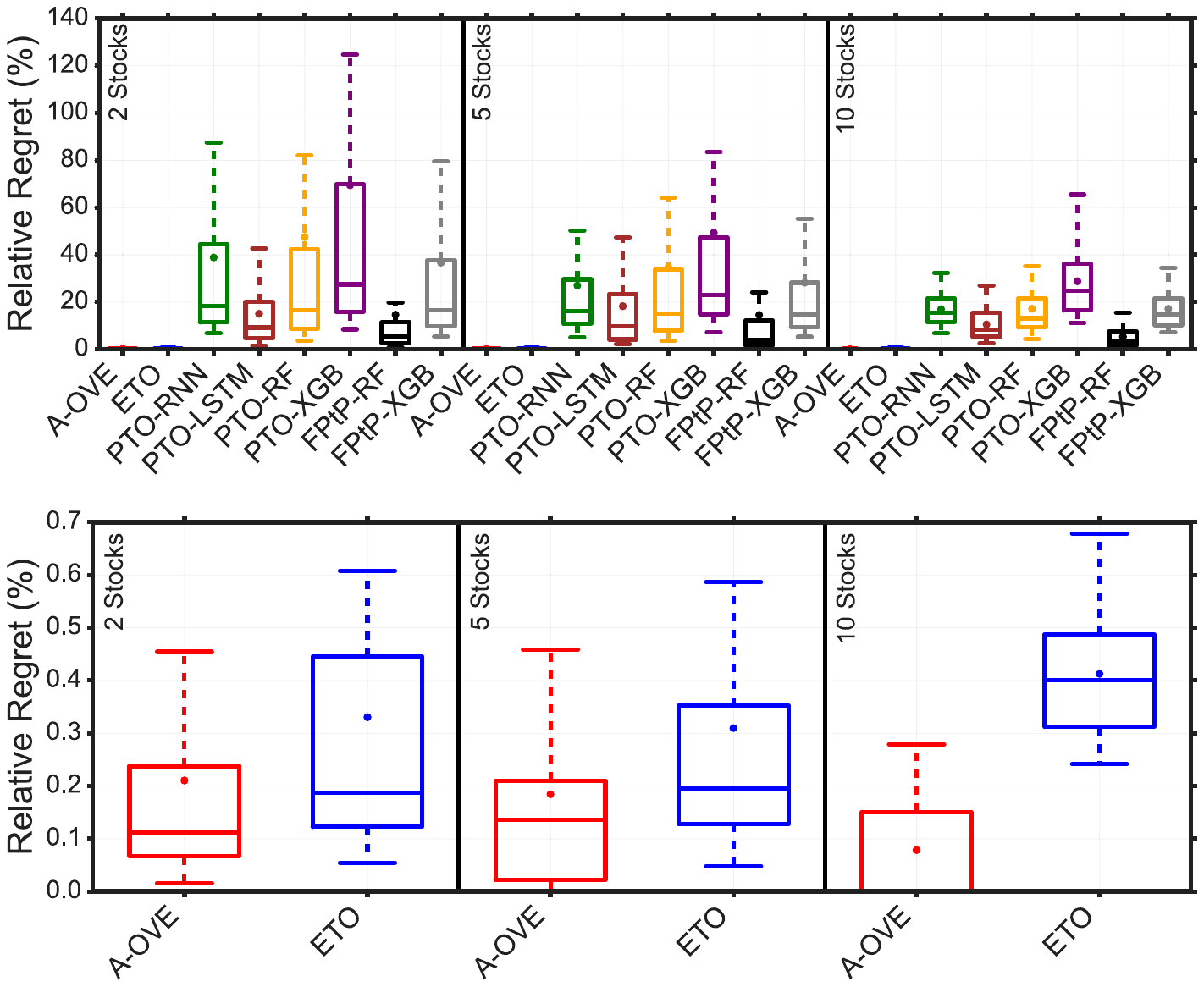}
    \vspace*{-2mm}
	\caption{Relative regret under well-specified setting.}
	\label{fig:11_relative_regret}
    \vspace{-.2cm}
\end{figure}

\textbf{Well-specified setting.} 
We first consider the portfolio problem~\eqref{eq.pi} with $n=2,5,$ and $10$ assets and oracle parameter $\bm{\xi} = (\Phi, \Theta, \Sigma_\epsilon)$ generated from a $\mathrm{VARMA}(1,1)$ process. Figure~\ref{fig:11_relative_regret} reports box plots of relative regret across oracle parameters, and  Table~\ref{tab:well} summarizes the results.  Across all dimensions, A-OVE consistently achieves the lowest average relative regret and exhibits the smallest variability, closely approaching oracle performance. Due to the well-specified setting, ETO shows good performance and has regret that is of the same order of magnitude as A-OVE. However, methods for estimating the parameters of $\mathrm{VARMA}$ models are notoriously unstable \citep{dufour2002linear}. The impact of such inaccuracies becomes apparent at $10$ stocks, where we can see significant degradation of performance of ETO. Model runtime also increases significantly.

Machine learning–based methods, both under PTO and FPtP implementations, incur substantially higher regret than A-OVE and ETO. This gap decreases as the number of assets increases, due to improvements in their predictive accuracy from higher information dimension. We also recover the observation in \citet{bertsimas2020predictive} that FPtP yields a substantial improvements over their PTO counterparts, confirming that optimizing with point estimates in $\pi$ introduces significant bias compared to optimizing the expected cost $\rho$. 

\begin{figure}[ht]
    \vspace{-.2cm}
    \centering
    \includegraphics[width=0.65\linewidth]{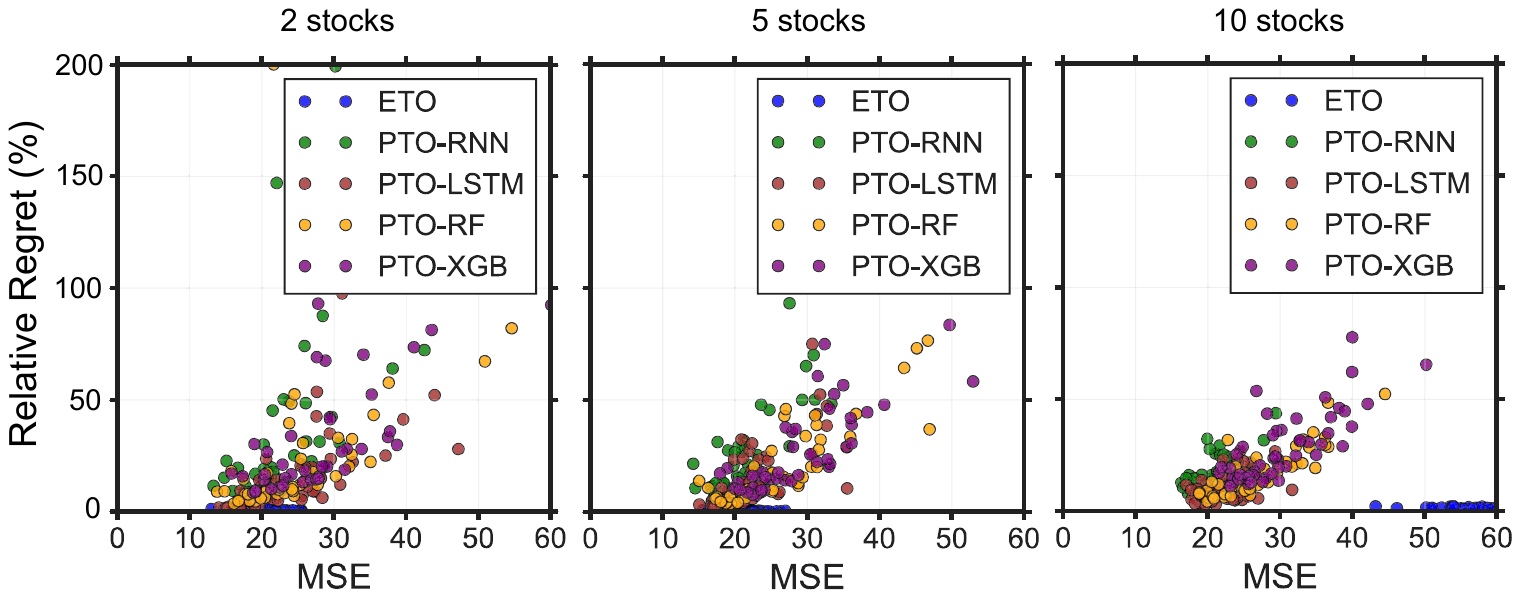}
    \vspace{-.2cm}
    \caption{Relative regret v.s. MSE under well-specified setting.}
    \label{fig:11_rr_mse}
    \vspace{-.2cm}
\end{figure}

Figure~\ref{fig:11_rr_mse} illustrates the relationship between relative regret and predictive accuracy across different problem dimensions. Within each machine learning method, the relative regret tends to decrease with predictive error and also as the dimension increases, implying that improving accuracy does help with improving regret. However, it is worthwhile to note that the method achieving lowest regret is not necessarily the method that has the best accuracy.  This observation highlights a fundamental misalignment between minimizing prediction error and optimizing downstream decision performance.

\begin{figure}[ht]
	\centering
	\includegraphics[trim=2mm 2mm 2mm 2mm, clip, width=0.65\linewidth]{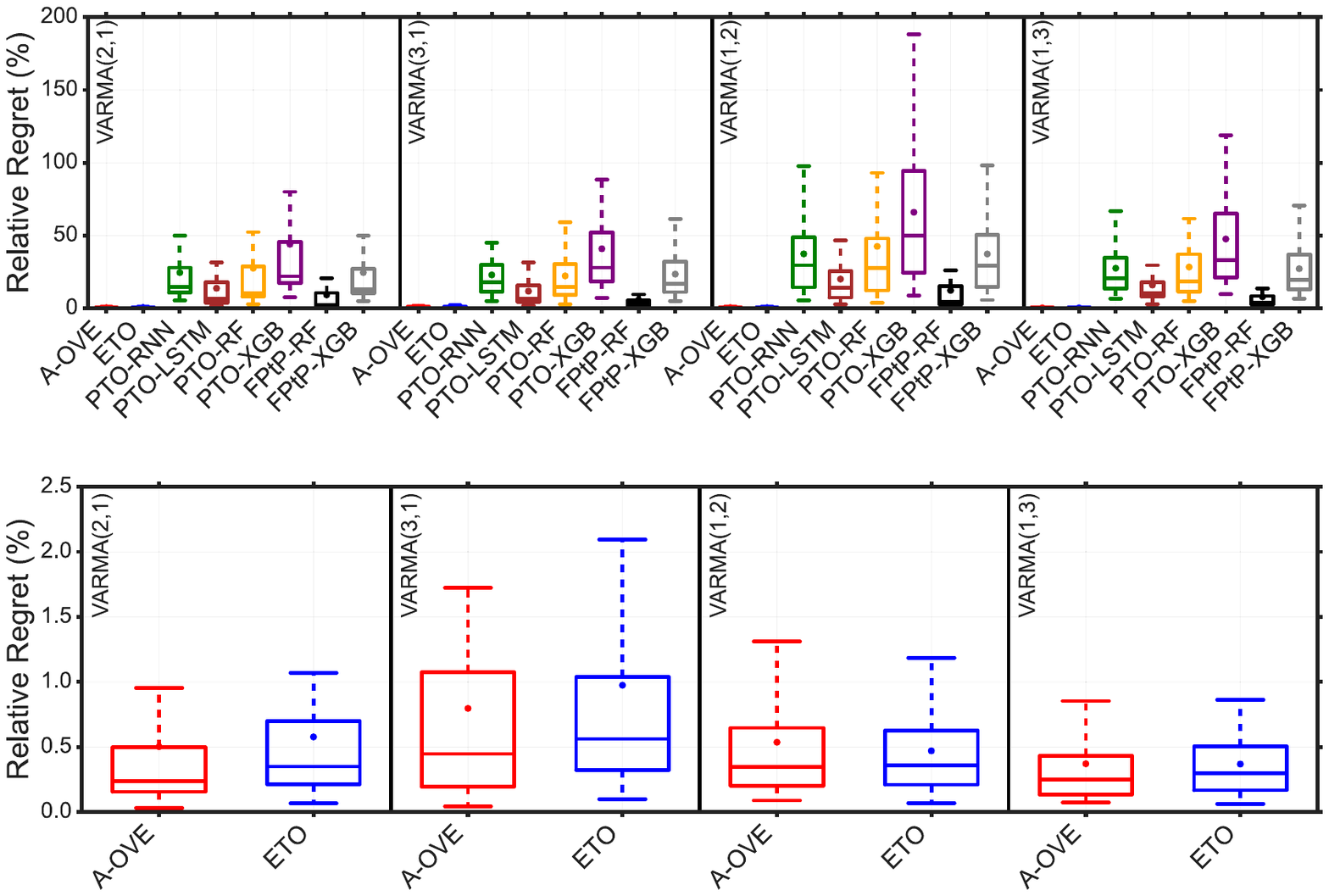}
    \vspace{-.2cm}
	\caption{Relative regret under mis-specified setting.}
	\label{fig:pq_relative_regret}
    \vspace{-.3cm}
\end{figure}

\textbf{Misspecified setting.}
We next evaluate robustness under small model mis-specification. We fix the number of stocks at $n=5$, because ETO becomes numerically unstable at $n=10$. We generate data from $\mathrm{VARMA}(p,q)$ processes outside the assumed $\mathrm{VARMA}(1,1)$ class, by varying one of $p$ or $q$ while holding the other fixed at one. Figure~\ref{fig:pq_relative_regret} reports relative regret under mis-specification. A-OVE continues to achieve the lowest regret in most settings, with ETO having regret of similar magnitude. Interestingly, machine learning methods which are not constrained by the $\mathrm{VARMA}$ setting and hence not vulnerable to mis-specification, do not manage to catch up to A-OVE and ETO. We further verify that their accuracies do indeed improve to the point of being significantly better than ETO in some cases, thus the stronger accuracy is insufficient in closing the gap in decision quality. 

ETO's unnatural stability even in the mis-specified case is mirrored in the literature \citep{Elmachtoub2020smart}. As A-OVE uses the information of the $\mathrm{VARMA}$ setting quite critically in its computation, via $g_1$ and $\rho$, it is expected that A-OVE will be more than proportionally affected by mis-specification. When mis-specification is in the parameter $q$, this effect is larger than with parameter $p$. We postulate that when $q$ is large, there is a larger noise reduction effect due to the moving average component, which allows ETO to perform better, as opposed to a degradation of A-OVE, which benefits from its accounting of noise through sampling effects. For more details about the results under the model mis-specification, please refer to Appendix~\ref{append.exp.syn}. 

\begin{figure}[htbp]
\vskip 0.1in
\begin{center}
\centerline{\includegraphics[width=0.95\textwidth]{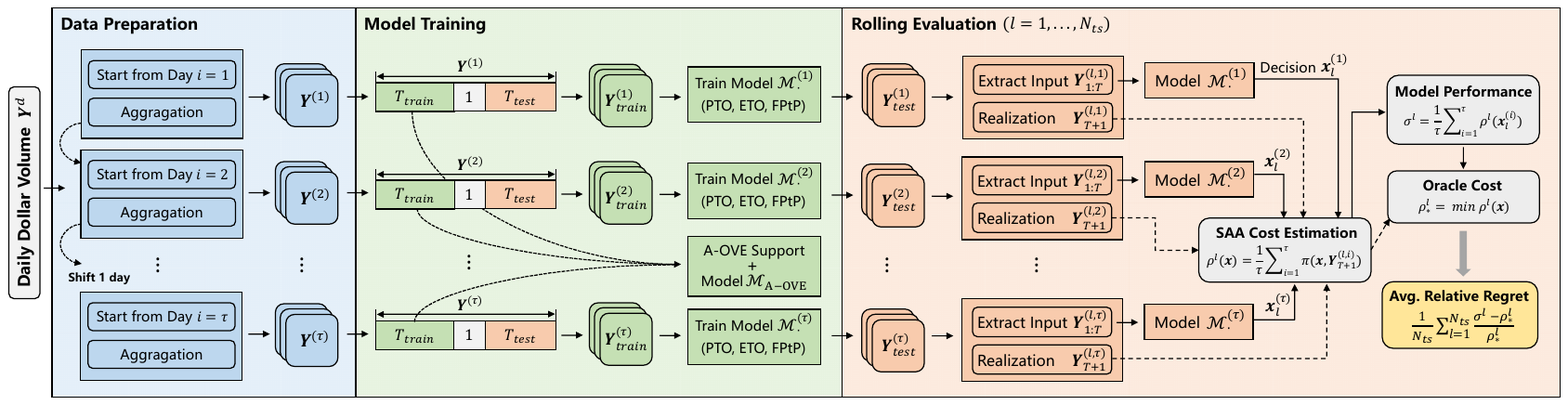}}
\caption{Evaluation on Real-World Data.}
\label{fig.real}
\end{center}
\vskip -0.3in
\end{figure}

\subsection{Numerical Results on Real-World Data}
We apply the proposed framework to real-world financial data using the World Stock Prices dataset. Our analysis focuses on the period from January~1,~2010 to January~1,~2019, excluding the COVID-19 period to avoid extreme volatility. In this experiment, we let each time period be a length of 2 weeks. We consider a portfolio of $n=4$ stocks—GOOGL, HMC, AXP, and BAMXF—whose log biweekly dollar trading volume is well described by a $\mathrm{VARMA}(1,1)$ process.

\textbf{Evaluation pipeline.}
As opposed to the case with synthetic data, we do not have the luxury of generating multiple $N_s$ data sets to control for sampling effects. Instead, we artificially construct different data sets for the same parameter by varying the start date when aggregating the total trading volume over the 2-week window, assuming that the underlying true parameter that generated the data remains the same across the 2-weeks. By doing so, we can supply different data sets that are assumed to be generated under the same true parameter to the models, as well as compute performance under the expected cost $\rho$, rather than depend on the deterministic cost $\pi$. 

Details of the implementation from data preparation, model training, and rolling out-of-sample evaluation is described in Figure~\ref{fig.real}. Specifically, daily dollar trading volume $\bm Y^d$ is aggregated into $\tau$ shifted log biweekly series using a rolling window. For each aggregation sequence $(\bm Y^{(i)})_{i=1}^{\tau}$, a collection of models $\mathcal{M}$ is trained using the training data, while A-OVE constructs its parameter support using all available training data. The out-of-sample performance is evaluated using a rolling-window scheme. At each evaluation step $l = 1,\dots,N_{ts}$, we extract $\tau$ input–output pairs ${(\bm Y^{(l,i)}_{1:T}, \bm Y^{(l,i)}_{T+1})}_{i=1}^{\tau}$ from the testing data. Given these realizations, we define the empirical cost function $\rho^{l}(\bm x) := \frac{1}{\tau}\sum_{i=1}^{\tau} \pi(\bm x, \bm Y^{(l,i)}_{T+1})$, which serves as a SAA of the true expected cost at evaluation step $l$. Each trained model produces a decision $\bm x_l^{(i)}$ based on the input $\bm Y^{(l,i)}_{1:T}$. The performance of a model is then measured by averaging its realized costs across all $\tau$ aggregations. In addition, an oracle decision $\bm x_*^l$ is computed by directly minimizing $\rho^l(\bm x)$. We report the average relative regret across all rolling evaluation windows $\frac{1}{N_{ts}}\sum_{l=1}^{N_{ts}} \frac{\sigma_\cdot^l - \rho_*^l}{\rho_*^l}$ as the out-of-sample performance of each model relative to the oracle.

\textbf{Results.}
Table~\ref{tab:real} reports predictive accuracy, computation time, and relative regret of different methods on real-world financial data. A-OVE achieves consistently low relative regret, comparable to or better than the strongest prediction-based methods. Interestingly, PTO methods now exhibit a large variation in performance to choice of predictive model. Neural network models now show strong performance against other machine learning models in the PTO regime, though FPtP-RF still performs best amongst machine learning model. Neural network models also have significantly lower MSE compared to the other learners, but do not necessarily have lower regret compared to some models that have twice the predictive error, once again affirming the understanding that predictive accuracy does not guarantee decision quality, even in the real-world data setting.

\input{tables/real}

%% file: algorithm/synthetic.tex
\begin{algorithm}
\caption{Evaluation on Synthetic Data}
\small
\label{alg:synthetic}
\begin{algorithmic}[1]
\REQUIRE
number of oracles $N_o$;
number of samples per oracle $N_s$;
oracle parameter set $\Xi^{\text{ora}}$;
sample length $T$;
sample set $\{\mathcal{Y}_o\}_{o=1}^{N_o}$;
constants $\mu_0, D_1,\bm{e},\bm{x}^0$

\vspace{2mm}
\FOR{$\bm{\xi}_o$ {\bfseries in} $\Xi^{\text{ora}}$} 
    \STATE $\bm x^o_* \gets \argmin_{\bm{x}} \rho(\bm{x}; \bm{\xi}_o)$
    \STATE $\rho_*^o \gets \rho(\bm x^o_*; \bm{\xi}_o)$
    \vspace{1mm}
    \FOR{$\bm Y^o_i$ {\bfseries in} $\mathcal{Y}_o$}
        \STATE $\bm{x}^o_i \gets $ \textsc{MethodSolve}$(\bm Y^o_i)$
        \STATE $\rho^o_{i} \gets \rho(\bm{x}^o_i; \bm{\xi}_o)$
        \RightComment{via \eqref{eq.rho}}
        \STATE $r^o_i \gets (\rho^o_{i}-\rho_*^o)/\rho_*^o$
    \ENDFOR
    \vspace{1mm}
    \STATE $\hat{\sigma}^o_{\text{syn}} \gets \frac{1}{N_s} \sum_{i=1}^{N_s}  \rho_{i}$
    \RightComment{approximation of \eqref{eq.oos}}
    \STATE $R_o \gets \frac{1}{N_s} \sum_{i=1}^{N_s} r^o_i$
\ENDFOR

\vspace{2mm}
\STATE {\bfseries Return } $\frac{1}{N_o}\sum_{o=1}^{N_o} R_o$
\end{algorithmic}
\end{algorithm}

%% file: tables/well.tex
\begin{table*}[htpb]
\caption{Summary results of well-specified setting.}
\label{tab:well}
\centering
\renewcommand{\arraystretch}{1.1}
\begin{small}
\begin{sc}
\begin{tabular}{lcccccccccc}
\toprule
\multicolumn{1}{c}{\multirow{2}{*}{Model}} & \multicolumn{3}{c}{Time (second)} & \multicolumn{3}{c}{MSE} & \multicolumn{3}{c}{Relative Regret (\%)} \\
\cmidrule(lr){2-4} \cmidrule(lr){5-7} \cmidrule(lr){8-10}
 & $n=2$ & $n=5$ & $n=10$ & $n=2$ & $n=5$ & $n=10$ & $n=2$ & $n=5$ & $n=10$ \\ \hline
A-OVE & 1.56 & 3.18 & 3.37 & - & - & - & \textbf{0.21} & \textbf{0.18} & \textbf{0.08} \\
ETO & 1.20 & 4.45 & 13.44 & \textbf{20.05} & \textbf{21.21} & 59.56 & 0.33 & 0.31 & 0.41 \\ \hline
PTO-RNN & \textbf{0.25} & \textbf{0.27} & 0.43 & 23.14 & 21.32 & \textbf{20.86} & 38.85 & 27.00 & 16.95 \\
PTO-LSTM & 0.31 & 0.31 & \textbf{0.33} & 23.80 & 22.36 & 21.99 & 15.02 & 18.23 & 10.55 \\
PTO-RF & 0.43 & 0.44 & 0.45 & 26.22 & 27.90 & 26.96 & 47.53 & 34.62 & 17.15 \\
PTO-XGB & 1.30 & 3.06 & 5.25 & 30.29 & 31.68 & 30.50 & 69.46 & 49.27 & 28.81 \\ \hline
FPtP-RF & 0.43 & 0.44 & 0.45 & 26.22 & 27.90 & 26.96 & 14.66 & 14.58 & 5.33 \\
FPtP-XGB & 1.30 & 3.06 & 5.25 & 30.29 & 31.68 & 30.50 & 36.69 & 28.14 & 17.15 \\ 
\bottomrule
\end{tabular}
\end{sc}
\end{small}
\end{table*}

%% file: tables/real.tex
\begin{table}[ht]
\vskip -0.1in
\caption{Summary results of real-world testing.}
\label{tab:real}
\vskip -0.4in
\begin{center}
\begin{small}
\begin{sc}
\begin{tabular}{lcccr}
\toprule
Method & Time(s) & MSE & Relative Regret (\%) \\
\midrule
A-OVE     & 0.19& -   & \textbf{0.02} \\
ETO       & \textbf{0.00}& 0.53& \textbf{0.02} \\
PTO-RNN   & \textbf{0.00}& 0.25& 0.04 \\
PTO-LSTM  & \textbf{0.00}& \textbf{0.23}& 0.03 \\
PTO-RF    & 0.18& 0.44& 1.89 \\
PTO-XGB   & 0.03& 1.36& 327.82 \\
FPtP-RF   & 0.18& 0.44& \textbf{0.02} \\
FPtP-XGB  & 0.03& 1.36& 1.94 \\
\bottomrule
\end{tabular}
\end{sc}
\end{small}
\end{center}
\vskip -0.2in
\end{table}

%% file: chapters/5-conclusion.tex
\section{Conclusion and Broader Impact}
\label{sec:conclusion}

Our results indicate a few important principles in contextual optimization. First, predictive accuracy does not guarantee good decisions. Often, the most accurate model does not give the lowest regret and sometimes the model with the best decisions can have double the predictive error as the most accurate model. This is verified not just in simulation, but even under experiments with real-world data. This does question the current directions of machine learning to put predictive accuracy at its core. Many decisions in the real world are not predictive tasks. While prediction contributes to decision-making, their relationship can be non-linear. At the same time, relative to exploration into learning models, the community focusing on contextual and end-to-end learning is relatively small. 

Second, we believe that our experiments have presented clearer evidence of the bias induced when solving PTO as opposed to ETO. In all our experiments, FPtP significantly reduces the regret of their PTO counterparts, without any significant penalty to computational times. Moving forward, we recommend utilizing FPtP in place of PTO by default when making decisions under uncertainty.

Third, our experiments illustrate the clear potential of out-of-sample methods, such as \citet{feng2023framework}, \citet{besbes2023big} and \citet{LokeZhuZuo2023}. Conceivably, OVE would perform even better when it is given the ability to react to the context. We hope that our work here will draw more attention to this family of methods, and their associated techniques.

Several directions for future research emerge from this work.
On the theoretical side, there is a lot of scope to extend the existing literature on out-of-sample methods. From our experiments, we believe there is significant value if a contextual version of OVE could be designed. Additionally, OVE did indicate a drop in performance as a result of mis-specification. Such drawbacks could be potentially mitigated if a non-parametric version of OVE was developed, thus liberating it from its reliance on its parametric assumptions. Notably, Theorem \ref{thm.varma_pq} already indicates that the dependence of the theory of \citet{LokeZhuZuo2023} on sufficient statistics can potentially be diminished to some extent. 

From a practical perspective, our work is only a singular test of out-of-sample optimal methods on the portfolio optimization problem. It would be useful to examine if our conclusions extend beyond the current setting.

%% file: chapters/appendix.tex
\section{Supplementary Material: Deferred Proofs}
\label{append.proofs}
This section presents proofs of the results stated in Sections~\ref{sec:preliminaries} and \ref{sec:theory}. For the reader’s convenience, each result is restated prior to its proof.

\textbf{Theorem~\ref{thm.original_OVE}.}
\begingroup\itshape
    Let $\hat{\bm \xi} \in \hat{\Xi} \subseteq \mathbb{R}^r$ be any sufficient statistic for $\bm \xi$ with Fisher-Neyman factorization $L(\bm Y; \bm \xi) = g_0(\bm Y) g_1(\hat{\bm \xi}(\bm Y), \bm \xi)$ for non-negative $g_0$ and $g_1$. Let $\mathcal{X}(\hat{\bm \xi})$ be the set of feasible functions that are representable as a function of sufficient statistic $\hat{\bm \xi}$:
    \[
    \mathcal{X}(\hat{\bm \xi}):= \left\{ \bm x : \mathcal{Y}^K \rightarrow \mathcal{B} \,\middle|\, \exists\, \tilde{\bm x} : \hat\Xi \rightarrow \mathcal{B},  \bm x(\bm Y) = \tilde{\bm x}(\hat{\bm \xi}(\bm Y)), \forall \bm Y \in \mathcal{Y}^K \right\}.
    \]
    
    If there is a re-parameterization from $\bm Y \in \mathcal{Y}^K$ to $(\hat{\bm\xi},\bm Y|\hat{\bm\xi}) \in \hat\Xi\times\mathcal{Y}^K$ with non-negative Jacobian, denoted $J(\hat{\bm\xi},\bm Y|\hat{\bm\xi})$, for all $\hat{\bm\xi}\in\hat\Xi$ and $\bm Y|\hat{\bm\xi}\in\mathcal{Y}^K$, then for fixed prior $u$, under Assumption \ref{asmp:convex_C},
    \begin{enumerate}
        \item If an optimal solution for \eqref{eq.ove} exists, then one can be found in $\mathcal X(\hat{\bm\xi})$.
        \item The function $\bm x_{\scriptscriptstyle{\text{OVE}}}$, if it exists, defined pointwise on $\hat{\bm\xi}$, is prior optimal, i.e.,
        \[
        \bm x_{\scriptscriptstyle{\text{OVE}}}(\hat{\bm\xi}) \in \argmin_{\bm x\in\mathcal{B}} \beth(\bm x, \hat{\bm\xi}) 
        ,\quad \mbox{where} \quad 
        \beth(\bm x, \hat{\bm\xi}) = \mathbb{E}_{\bm \xi \sim u}\left[\rho(\bm x; \bm \xi) g_1(\hat{\bm\xi}, \bm \xi)\right].
        \]
        \item If furthermore Assumption \ref{asmp:strong_convexity} holds, then $\bm x_{\scriptscriptstyle{\text{OVE}}}$ exists and is unique.
    \end{enumerate}
\endgroup
\begin{proof}[Proof of Theorem~\ref{thm.original_OVE}]
    The proof is available from \citet{LokeZhuZuo2023}. 
\end{proof}
\textbf{Theorem~\ref{thm.varma_pq}.} 
\begingroup\itshape
Consider a $\mathrm{VARMA}(p,q)$ process $\bm Y$ with parameters $\bm \xi = ( \bm \Phi, \bm \Theta, \Sigma_{\epsilon})$ satisfying Assumptions~\ref{assump: causal_invert} and~\ref{assump:sym_cov}. Then there exists a forward transformation $\{\bm W_t\}$ and an associated one-step predictor $\{\hat{\bm{W}}_t\}$ such that the exact likelihood of $\bm Y$ admits the Fisher–Neyman decomposition
    \[
        L(\bm{Y};\bm{\xi})=g_0(\bm{Y}) g_1(\bm{Y},\bm{\xi}),
    \]
    where
    \[
        g_0(\bm{Y}) = (2 \pi) ^ {- \frac {n T}{2}}, \quad
        g_1(\bm{Y},\bm{\xi}) = \left(\prod_ {t = 1} ^ {T} \det  (\Sigma_t) \right) ^ {- \frac {1}{2}} \exp \left\{- \frac {h(\bm Y,\bm{\xi})}{2} \right\}.
    \]
    Here, 
    \[
        h(\bm Y,\bm{\xi}) = \sum_ {t = 1} ^ {T} (\bm{W}_{t}-\hat{\bm{W}}_t)^\top  \Sigma_t^{- 1} (\bm{W}_{t}-\hat{\bm{W}}_t),
    \]
    and the covariance matrix $\Sigma_t$ of prediction error $\bm{W}_{t}-\hat{\bm{W}}_t$ are available in Appendix~\ref{append.proofs}.
\endgroup
\begin{proof}[Proof of Theorem~\ref{thm.varma_pq}]
Following the innovation algorithm for applying a $\mathrm{VARMA}(p,q)$ model to the backward‐transformed process proposed by \citet{brockwell1991time}, we adopt the approach of \citet{esa1998sufficient} and \citet{kharrati2009sufficient} to derive an innovation algorithm for the forward‐transformed process. This enables us to obtain the likelihood function of the $\mathrm{VARMA}(p,q)$ model.

We begin with a $\mathrm{VARMA}(p,q)$ process that satisfies Assumptions~\ref{assump: causal_invert} and~\ref{assump:sym_cov}. Under these conditions, the process admits an equivalent forward representation:
\[
\bm{Y}_t-\sum_{i=1}^p\Phi_i \bm{Y}_{t+i}=\bm{\tilde{\epsilon}}_t+\sum_{i=1}^q\Theta_i \bm{\tilde{\epsilon}}_{t+i},
\]
where $\{\bm{\tilde{\epsilon}}_t\}\sim \text{WN}(\mathbf{0},\,\Sigma_\epsilon)$, sharing the same distribution as $\{\bm{\epsilon}_t\}$. For notational simplicity, we continue to denote this white noise sequence by ${\bm{\epsilon}_t}$.

Let $l = \max\{p,q\}$ and define the transformation $\bm{W}_t$ as:
\[
    \bm{W}_t=
    \begin{cases}
        \bm{Y}_t-\sum_{i=1}^p \Phi_i \bm{Y}_{t+i} &\quad t\in[T-l],\\
        \bm{Y}_t &\quad t\in[T-l+1,T].
    \end{cases}
\]

Lemma~\ref{lemma:Gamma_W} characterize its covariance structure and then using this result, we can derive the innovation algorithm for the forward-transformed process, which is stated in Lemma~\ref{lemma:inno_alg_W}. It is worth noting that the Jacobian of the transformation from $\bm{Y}_t$ to $\bm{W}_t$ equals one. Consequently, the sequence $\{\bm{W}_t\}$ preserves the full probabilistic information contained in $\bm Y$. The exact likelihood function of the transformed process $\{\bm{W}_t\}$ is therefore given by
\[
    L (\bm{Y};\bm\xi) 
    = (2 \pi) ^ {- \frac {n T}{2}} \left(\prod_ {t = 1} ^ {T} \det  (\Sigma_t)\right) ^ {- \frac {1}{2}}
    \times \exp \left\{- \frac {1}{2} \sum_ {t = 1} ^ {T} \bm{R}_{t}^\top  \Sigma_t^{- 1} \bm{R}_{t}\right\}
\]
where $\bm{R}_{t} = \bm{W}_{t}-\hat{\bm{W}}_t$. Applying the Fisher-Neyman factorization theorem, the likelihood function can be expressed in the decomposed form
\[
L(\bm{Y};\bm{\xi})=g_0(\bm{Y}) g_1(\bm{Y},\bm{\xi}),
\]
where
\[
    g_0(\bm{Y}) = (2 \pi) ^ {- \frac {n T}{2}}, \quad
    g_1(\bm{Y},\bm{\xi}) = \left(\prod_ {t = 1} ^ {T} \det  (\Sigma_t) \right) ^ {- \frac {1}{2}} \times \exp \left\{- \frac {1}{2} \sum_ {t = 1} ^ {T} \bm{R}_{t}^\top  \Sigma_t^{- 1} \bm{R}_{t}\right\}.
\]

\end{proof}

\begin{lemma}\label{lemma:Gamma_W}
    Let $\Gamma_{\bm{W}}(t,t-h)$ denote the covariance between $\bm{W}_t$ and its lag-$h$ value for $h\geq 0$. Then
    \begin{equation}\label{eq.Gamma_W}
        \Gamma_{\bm{W}}(t,t-h)=
            \begin{cases}
                \Gamma_{\bm{Y}}(h), & 0\leq h<l,\, T-l<t\leq T,\\[12pt]
                \displaystyle
                \Gamma_{\bm{Y}}(h)-\sum_{i=1}^{p}\Phi_i\,\Gamma_{\bm{Y}}(h-i), & t+l-T\leq h< t+2l-T,\, T-l<t\leq T,\\[12pt]
                \displaystyle
                \sum_{i=0}^{q-h}\Theta_i\Sigma_\epsilon\Theta_{i+h}^{\top}, & 0\leq h\leq q,\, 1\leq t\leq T-l,\,\\[12pt]
                \bm{0}_{n\times n}, & \text{otherwise.}
            \end{cases}          
    \end{equation}
    where $\Gamma_{\bm{Y}}(h)$ is the covariance matrix of $\bm{Y}_t$ at lag $h$.
\end{lemma}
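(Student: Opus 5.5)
The plan is to compute each block $\Gamma_{\bm W}(t,t-h)=\mathrm{Cov}(\bm W_t,\bm W_{t-h})$ directly from the definition of $\bm W_t$. I would use the forward (time-reversed) representation from the proof of Theorem~\ref{thm.varma_pq} together with two consequences of the assumptions. First, Assumption~\ref{assump:sym_cov} says the spectral density is symmetric under $z\mapsto z^{-1}$, so $\Gamma_{\bm Y}(-h)=\Gamma_{\bm Y}(h)$; equivalently $\Gamma_{\bm Y}(h)=\Gamma_{\bm Y}(h)^\top$. This is what allows the stated formulas to ignore the order and transposition of lags. Second, Assumption~\ref{assump: causal_invert}, applied to the forward model, makes $\bm Y_t$ \emph{forward-causal}: $\bm Y_t=\sum_{j\ge 0}\Psi_j\bm\epsilon_{t+j}$, so $\bm Y_t$ is uncorrelated with $\bm\epsilon_s$ for every $s<t$. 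I would also adopt the convention $\Theta_0=I_n$ and $\Phi_0=-I_n$.

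I would split according to where $t$ and $t-h$ fall relative to the cut-off $T-l$.

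\emph{Both indices in the head block} ($t\le T-l$, so also $t-h\le T-l$). Here the forward equation gives
\[
\bm W_s=\bm\epsilon_s+\sum_{i=1}^q\Theta_i\bm\epsilon_{s+i}
\]
for each such $s$, so $\{\bm W_s\}$ is a pure forward $\mathrm{VMA}(q)$ on this range. Using the i.i.d.\ structure of the innovations, the cross-covariance is $\sum_{i=0}^{q-h}\Theta_i\Sigma_\epsilon\Theta_{i+h}^\top$ for $0\le h\le q$ and $\bm 0$ for $h>q$. This is the third line of \eqref{eq.Gamma_W}, up to the transpose convention handled in the obstacle step below.

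\emph{Both indices in the tail block} ($t,t-h>T-l$, which forces $h<l$). Here $\bm W=\bm Y$ on both indices, so stationarity gives $\Gamma_{\bm Y}(h)$ immediately.

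\emph{Mixed case} ($t>T-l\ge t-h$). I would expand by bilinearity:
\[
\mathrm{Cov}\Big(\bm Y_t,\ \bm Y_{t-h}-\sum_{i=1}^p\Phi_i\bm Y_{t-h+i}\Big)=\Gamma_{\bm Y}(h)-\sum_{i=1}^p\Gamma_{\bm Y}(h-i)\Phi_i^\top .
\]
Symmetry of $\Gamma_{\bm Y}$ then lets me rewrite this as the stated $\Gamma_{\bm Y}(h)-\sum_i\Phi_i\Gamma_{\bm Y}(h-i)$ after taking the appropriate transpose of the block. To obtain the vanishing part of this case, I would instead write $\bm W_{t-h}$ in its MA form, which involves only $\bm\epsilon_{t-h},\dots,\bm\epsilon_{t-h+q}$. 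Forward causality of $\bm Y_t$ then shows the covariance is zero as soon as $t-h+q<t$, that is, $h>q$. The condition $t-h\le T-l$ gives the lower limit $h\ge t+l-T$ in \eqref{eq.Gamma_W}. The upper limit in the lemma comes from combining $h\le q\le l$ with $t\le T$, which bounds $h$ relative to $t+2l-T$. All remaining index pairs fall into the ``otherwise'' case, and the same causality argument shows they are $\bm 0$.

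The step I expect to be the main obstacle is the bookkeeping of conventions rather than any conceptual difficulty. There are three pieces. One is the transpose and order conventions for $\Gamma_{\bm W}(t,t-h)$ versus $\Gamma_{\bm W}(t-h,t)$. Another is checking that the forward representation has the same $\Phi_i,\Theta_i,\Sigma_\epsilon$; this is exactly where Assumption~\ref{assump:sym_cov} enters, and I would verify it by matching spectral densities. The last is pinning down the exact boundary of the mixed-case index range. I would treat the symmetric-covariance identity as the tool that collapses each of these ambiguities, and check the resulting ranges on a small example with $p=q=l=1$.
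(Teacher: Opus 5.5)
Your case split (both indices beyond $T-l$; $t>T-l\ge t-h$; both indices at most $T-l$; the rest) and the computation inside each case coincide with the paper's proof. Your mixed-case expansion $\Gamma_{\bm Y}(h)-\sum_i\Gamma_{\bm Y}(h-i)\Phi_i^\top$ is in fact more accurate than the paper's. The gap is the step you defer as bookkeeping. These correct values cannot be matched to the ranges and formulas as stated in \eqref{eq.Gamma_W}, because the statement is off in two places, and the symmetric-covariance identity repairs neither.

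First, your tail-block condition $t-h>T-l$ means $0\le h<t+l-T$, which is strictly smaller than the stated $0\le h<l$ whenever $t<T$. For $l\ge 2$ and $T-l<t<T$, the stated first and second lines overlap on $t+l-T\le h<l$, with generally different values. At $(t,h)=(T-l+1,1)$ one has $t-h=T-l$, so the block is the mixed-case expression, not $\Gamma_{\bm Y}(1)$. For a scalar $\mathrm{AR}(2)$ it is $0$ by your causality argument, while $\Gamma_{\bm Y}(1)\neq 0$ when $\Phi_1\neq 0$. The paper's first bullet has the right condition and loses it when the ranges are reorganized. Your proposed check with $p=q=l=1$ cannot catch this, since the overlap is empty when $l=1$.

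Second, take $\mathrm{Cov}(\bm A,\bm B)=\mathbb{E}[\bm A\bm B^\top]$; under this convention your head-block computation yields the third line exactly, with no transpose. Symmetry of the $\Gamma_{\bm Y}(k)$ then only identifies the stated second line with $\mathrm{Cov}(\bm W_{t-h},\bm W_t)$, the transpose of the block. No single convention makes the second and third lines both correct under Assumption~\ref{assump:sym_cov} alone. For $p=q=1$, $t=T$, $h=1$, your causality argument gives $\mathbb{E}[\bm Y_T\bm W_{T-1}^\top]=\Sigma_\epsilon\Theta^\top$, as in Appendix~\ref{append.proofs_varma11}. The second line instead evaluates to $\Gamma_{\bm Y}(1)-\Phi\Gamma_{\bm Y}(0)=\Theta\Sigma_\epsilon$.

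Symmetric covariance forces each $\Gamma_{\bm Y}(k)$ to be symmetric, but not $\Theta\Sigma_\epsilon$. Take $\Sigma_\epsilon=I_2$, $\Phi=\mathrm{Diag}(\tfrac12,-\tfrac12)$ and $\Theta=\begin{pmatrix}3/4&1/10\\-1/10&-3/4\end{pmatrix}$; the model is causal and invertible. Since $p=1$, the assumption is equivalent to $\Phi(z^{-1})\Theta(z)\Theta^\top(z^{-1})\Phi^\top(z)$ being invariant under $z\mapsto z^{-1}$, which one checks coefficient by coefficient. The stated formulas do hold under the symmetric, commuting parameterization of Lemma~\ref{lemma:sym_cov} that the paper actually uses. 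So what your argument proves is the corrected lemma: the first line on $0\le h<t+l-T$, and the second line with $\Gamma_{\bm Y}(h-i)\Phi_i^\top$.

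A smaller fix concerns the upper limit $h<t+2l-T$. It comes from $t>T-l$, which gives $t+2l-T>l\ge q$, not from $t\le T$. The limit is not sharp: for $q<h<t+2l-T$ the second-line expression is itself $\bm 0$ (by your causality argument, or Yule--Walker), consistent with the vanishing you prove.
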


\begin{proof}[Proof of Lemma~\ref{lemma:Gamma_W}]
We evaluate $\Gamma_{\bm{W}}(t,t-h)$ by considering several cases:
\begin{itemize}
    \item If $T-l < t-h \leq t \leq T$, then $\Gamma_{\bm{W}}(t,t-h) = Cov(\bm{Y}_t, \bm{Y}_{t-h})=\Gamma_{\bm{Y}}(h)$.
    
    \item If $T-2l < t-h \leq T-l < t \leq T$, then
    \[
    \Gamma_{\bm{W}}(t,t-h) = Cov(\bm{Y}_t, \bm{Y}_{t-h}-\sum_{i=1}^p \Phi_i \bm{Y}_{t-h+i} )=\Gamma_{\bm{Y}}(h) - \sum_{i=1}^p \Phi_i \Gamma_{\bm{Y}}(h-i)
    \]
    
    \item If $t-q \leq t-h \leq t \leq T-l$, then 
    \[
    \begin{aligned}
        \Gamma_{\bm{W}}(t,t-h) 
        &= Cov(\bm{Y}_t-\sum_{i=1}^p \Phi_i \bm{Y}_{t+i}, \bm{Y}_{t-h}-\sum_{i=1}^p \Phi_i \bm{Y}_{t-h+i} ) \\
        &= Cov(\bm{\epsilon}_t + \sum_{i=1}^q \Theta_i \bm{\epsilon}_{t+i}, \bm{\epsilon}_{t-h} + \sum_{i=1}^q \Theta_i \bm{\epsilon}_{t-h+i}) \\
        &= \sum_{i=0}^{q-h} \Theta_i \Sigma_\epsilon \Theta_{i+h}^\top
    \end{aligned}
    \]
    where $\Theta_0=\mathbf I_{n\times n}$
    \item Otherwise, $\Gamma_{\bm{W}}(t,t-h)=\bm{0}_{n \times n}$
\end{itemize}

By reorganizing the ranges for the cases above, we obtain the expression stated in Lemma~\ref{lemma:Gamma_W}.
\end{proof}

\begin{lemma}[The Innovation Algorithm of $\{\bm{W}_t\}$]
\label{lemma:inno_alg_W}
The one-step predictor of the transformed process $\{\bm{W}_t\}$ are given by 
\begin{equation}
    \hat{\bm{W}}_t=
    \begin{cases}
        0 &\quad t=1,\\
        \sum_{j=1}^{t-1} \Theta_{t-1,j}(\bm{W}_{t-j}-\hat{\bm{W}}_{t-j})&\quad t\in[2,T],
    \end{cases}
\end{equation}
and the covariance matrix $\Sigma_t$ of the prediction error $\bm{R}_{t}=\bm{W}_{t}-\hat{\bm{W}}_{t}$ are given by
\begin{equation}
    \Sigma_t=
    \begin{cases}
        \Gamma_{\bm{W}}(1,1) &\quad t=1,\\
        \Gamma_{\bm{W}}(t,t) - \sum_{j=1}^{t-1}\Theta_{t-1,j} \Sigma_{t-j} \Theta_{t-1,j}^\top &\quad t\in[2,T],
    \end{cases}
\end{equation}
where the coefficients $\Theta_{t,j}$ are determined recursively by
\begin{equation}
    \Theta_{t,j} = \left( \Gamma_{\bm{W}}(t+1,t-j+1) - \sum_{k=0}^{t-j-1} \Theta_{t,t-k} \Sigma_{k+1} \Theta_{t-j,t-j-k}^\top \right) \Sigma_{t-j+1}^{-1}, \quad \text{for } j\in[t], \, t\in[2,T]
\end{equation}
\end{lemma}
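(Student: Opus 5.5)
The plan is to run the standard innovations-algorithm argument of \citet{brockwell1991time} for a general non-stationary zero-mean Gaussian sequence, applied to $\{\bm W_t\}$, whose covariance function $\Gamma_{\bm W}(t,s)$ is given by Lemma~\ref{lemma:Gamma_W}. Write $\mathcal H_{t-1}=\overline{\mathrm{span}}\{\bm W_1,\dots,\bm W_{t-1}\}$, taken componentwise. Define $\hat{\bm W}_t$ as the best linear predictor of $\bm W_t$ given $\mathcal H_{t-1}$; since the process is Gaussian, this is the conditional mean. Set $\bm R_t=\bm W_t-\hat{\bm W}_t$.

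The steps are as follows.
\begin{enumerate}
\item Show the innovations are orthogonal. Since $\bm R_j\in\mathcal H_j\subseteq\mathcal H_{t-1}$ for $j<t$, and $\bm R_t\perp\mathcal H_{t-1}$, we get $\mathrm{Cov}(\bm R_t,\bm R_j)=0$ for $j\neq t$.
\item Express the predictor in terms of innovations. By Gram--Schmidt, $\mathcal H_{t-1}=\mathrm{span}\{\bm R_1,\dots,\bm R_{t-1}\}$. Hence $\hat{\bm W}_t=\sum_{j=1}^{t-1}\Theta_{t-1,j}\bm R_{t-j}$, with $\hat{\bm W}_1=0$.
\item Identify the coefficients. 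Take the covariance of both sides with $\bm R_{t-j}$. Orthogonality gives $\Theta_{t-1,j}\Sigma_{t-j}=\mathrm{Cov}(\bm W_t,\bm R_{t-j})$. Expand $\bm R_{t-j}=\bm W_{t-j}-\sum_{k}\Theta_{t-j-1,k}\bm R_{t-j-k}$ and substitute the already-determined coefficients. This yields $\mathrm{Cov}(\bm W_t,\bm R_{t-j})=\Gamma_{\bm W}(t,t-j)-\sum_{k}\Theta_{t-1,t-1-k}\Sigma_{k+1}\Theta_{t-j-1,t-j-1-k}^\top$. After shifting the index $t-1\mapsto t$, this is exactly the recursion stated in the lemma.
\item Obtain the error covariance by Pythagoras: $\Sigma_t=\Gamma_{\bm W}(t,t)-\mathrm{Cov}(\hat{\bm W}_t)=\Gamma_{\bm W}(t,t)-\sum_{j}\Theta_{t-1,j}\Sigma_{t-j}\Theta_{t-1,j}^\top$.
\item Order the computation. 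The recursion is run in the order $\Sigma_1$, then $\Theta_{1,1}$, then $\Sigma_2$, then $\Theta_{2,2},\Theta_{2,1}$, then $\Sigma_3$, and so on. Every quantity on the right-hand side is therefore already available when it is needed.
\end{enumerate}

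The main obstacle is well-posedness, namely that each $\Sigma_t$ is invertible, since $\Sigma_{t-j+1}^{-1}$ appears in the recursion. I would prove this by showing that the joint covariance matrix of $(\bm W_1,\dots,\bm W_T)$ is positive definite.

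\begin{itemize}
\item The map $\bm Y\mapsto\bm W$ is block upper-triangular with identity diagonal blocks, so its Jacobian equals one and it is invertible.
\item The covariance of $(\bm Y_1,\dots,\bm Y_T)$ is nonsingular whenever $\Sigma_\epsilon\succ0$. This follows from causality (Assumption~\ref{assump: causal_invert}): the spectral density is bounded away from zero, which is the standard Brockwell--Davis argument.
\item Nonsingularity of the joint covariance then gives $\Sigma_t\succ0$, because $\Sigma_t$ is the Schur complement of a positive definite matrix.
\end{itemize}

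A secondary point is the index bookkeeping in the $k$-sum of step 3, which must match the stated form. I would check it carefully against the $\mathrm{VARMA}(1,1)$ case of \citet{kharrati2009sufficient}. Finally, note that Assumption~\ref{assump:sym_cov} enters only through the time-reversed representation, which is what justifies the case computations in Lemma~\ref{lemma:Gamma_W}. The innovations argument itself uses only the covariance function $\Gamma_{\bm W}$ and joint Gaussianity.
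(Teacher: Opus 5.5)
Your proposal is correct and follows the paper's route: the paper simply invokes Proposition 11.4.2 of \citet{brockwell1991time} for $\{\bm W_t\}$, whereas you rederive that proposition, with index bookkeeping that checks out, and also verify its nonsingularity hypothesis, which the paper leaves implicit. One small slip in that verification: a spectral density bounded away from zero comes from the invertibility condition $\det\Theta(z)\neq 0$ on $|z|=1$, not from causality; causality with $\Sigma_\epsilon\succ 0$ does, however, already make $\mathrm{Cov}(\bm Y_1,\dots,\bm Y_T)$ nonsingular, since isolating the most recent innovation in any nontrivial linear combination gives it positive variance.
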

\begin{proof}[Proof of Lemma~\ref{lemma:inno_alg_W}]
    The result follows directly from Proposition 11.4.2 in \citet{brockwell1991time} by applying it to the forward-transformed process $\{\bm{W}_t\}$.
\end{proof}

\section{Supplementary Material: Sufficient Statistics for $\mathrm{VARMA}(1,1)$}
\label{append.proofs_varma11}

As we mentioned, sufficient statistics for univariate $\mathrm{ARMA}$ processes were derived by \citet{esa1998sufficient}, while \citet{kharrati2009sufficient} derived explicit forms for the multivariate $\mathrm{VARMA}(1,1)$ setting under fixed parameters. We extend their results by directly applying Theorem~\ref{thm.varma_pq} with $\mathrm{VARMA}(1,1)$, which admits a closed-form recursive representation for the sufficient statistics of $\bm{\xi} = (\Phi, \Theta, \Sigma_\epsilon)$. 

Under Assumption~\ref{assump: causal_invert} and~\ref{assump:sym_cov}, there exists a white-noise sequence $\{\bm{\tilde{\epsilon}}_t\}\sim \mathrm{WN}(\mathbf{0},\,\Sigma_\epsilon)$ such that 
\[
\bm{Y}_t-\Phi \bm{Y}_{t+1}=\bm{\tilde{\epsilon}}_t+\Theta \bm{\tilde{\epsilon}}_{t+1}.
\]

This representation enables the construction of a forward-transformed process $\{\bm{W}_t\}$ defined by
\[
\bm{W}_t =
\begin{cases}
    \bm{Y}_t-\Phi \bm{Y}_{t+1}, & t\in[T-1],\\
    \bm{Y}_T, &  t=T,
\end{cases} 
\]

The process $\{\bm W_t\}$ admits a recursive innovation representation, which plays a central role in deriving the likelihood and sufficient statistics. Let $\Gamma_{\bm{W}}(t,t-h)$ denote the lag-$h$ covariance of $\bm{W}_t$, then
\[
    \Gamma_{\bm{W}}(t,t-h)=
        \begin{cases}
            \Theta \Sigma_{\epsilon} \Theta^\top + \Sigma_{\epsilon}, & h=0,\, t\in[T-1],\\
            \Gamma_{\bm{Y}}(0),  & h=0,\,t=T,\\
            \Sigma_{\epsilon}\Theta^\top, & h=1,\,t\in[T],\\
            \bm{0}_{n\times n},& \text{otherwise},
        \end{cases}
\]
where $\Gamma_{\bm{Y}}(0)$ denotes the lag-zero covariance of $\bm{Y}_t$. Define the innovation as $\bm{R}_t = \bm{W}_t - \hat{\bm{W}}_t$ with covariance $\Sigma_t$. Then the one-step predictor of $\{\bm{W}_t\}$ is given by
\[
    \hat{\bm{W}}_t=
    \begin{cases}
        \bm{0}, & t=1,\\
        \Theta_{t-1,1}\bm{R}_{t-1},& t\in[2,T],
    \end{cases}
\]
which can be computed recursively by $\{\Theta_{t,1}\}$ and $\{\Sigma_t\}$, where
$\Theta_{t,1}=\Sigma_\epsilon \Theta^\top \Sigma_{t}^{-1}$, for $t\in[T-1]$ and
\[
    \Sigma_t = 
    \begin{cases}
         \Theta \Sigma_{\epsilon} \Theta^\top + \Sigma_{\epsilon}, & t=1, \\
         \Theta \Sigma_{\epsilon} \Theta^\top + \Sigma_{\epsilon} - \Theta_{t-1,1} \Sigma_{t-1} \Theta_{t-1,1}^\top,  & t\in[2,T-1],\\
         \Gamma_{\bm{Y}}(0)-\Theta_{T-1,1} \Sigma_{T-1} \Theta_{T-1,1}^\top, & t=T.
    \end{cases}
\]

The innovations $\{\bm{R}_{t}\}$ can be expressed in terms of $\{\bm{\alpha}_t\}$ and $\{\bm{\beta}_t\}$. Specifically, we have
\[
    \bm{R}_{t}=   
        \begin{cases}
            \bm{\alpha}_1-\Phi \bm{\beta}_1 &\quad t=1, \\
             \bm{\alpha}_t-\Phi\bm{\beta}_t+\sum_{j=0}^{t-2} C_{j,t}\Phi\bm{\beta}_{t-1-j} &\quad t\in[2, T-1],\\
             \bm{\alpha}_{T}+\Theta_{T-1,1}\Phi\bm{\beta}_{T-1}-\Theta_{T-1,1}\sum_{j=0}^{T-3} C_{j,T-1}\Phi\bm{\beta}_{T-2-j} &\quad t=T,
        \end{cases}   
\]
where
\[
\begin{aligned}
    & \bm{\alpha}_1=\bm{Y}_1, \quad
    \bm{\alpha}_t=\bm{Y}_t-\Theta_{t-1,1}\bm{\alpha}_{t-1}, &&\quad \text{for } t\in[2,T], \\
    & \bm{\beta}_t=\bm{Y}_{t+1}, &&\quad \text{for } t\in[T-1], \\
    & C_{j,t}=(-1)^{2t-j} \Theta_{t-1,1} \times \cdots \times \Theta_{t-j-1,1}, &&\quad \text{for } j\in [0,T-3], t\in[2,T-1].
\end{aligned}
\]

Under Assumptions~\ref{assump: causal_invert} and \ref{assump:sym_cov}, the likelihood function has the following decomposition:
\[
L(\bm{Y};\bm{\xi})=g_0(\bm{Y}) g_1(\hat{\bm{\xi}}(\bm{Y}),\bm{\xi}),
\]
where 
\[
\hat{\bm{\xi}}(\bm{Y}) =  \left( (\bm Y_t \bm Y_{t-j}^{\top})_{j=0}^{t-1}\right)_{t=1}^{T}
\]
forms a set of sufficient statistics for $\bm{\xi}$ and
\[
    g_0(\bm{Y})= (2\pi)^{-\frac{nT}{2}}, \quad
    g_1(\hat{\bm{\xi}},\bm{\xi})
    = \left(\prod_{t=1}^{T} \det(\Sigma_t)\right)^{-\frac{1}{2}} \exp\left\{-\frac{h(\hat{\bm{\xi}},\bm{\xi})}{2}  \right\}.
\]
with $h(\hat{\bm{\xi}},\bm{\xi}) = \sum_{t=1}^T \bm{R}_{t}^\top \Sigma_t^{-1} \bm{R}_{t}$.


\section{Supplementary Material: Estimation of \texorpdfstring{$\Gamma_{\bm{Y}_t}(0)$}{Gamma(Y_t)(0)}}
\label{append.gamma_y}

Consider a $\mathrm{VARMA}(p,q)$ model with parameters $\bm{\xi}=(\{\Phi_i\}_{i=1}^p, \{\Theta_i\}_{i=1}^q, \Sigma_\epsilon)$, given by
\[
    \bm{Y}_t = \Phi_1 \bm{Y}_{t-1} + \cdots + \Phi_p \bm{Y}_{t-p}  + \bm{\epsilon}_t + \Theta_1 \bm{\epsilon}_{t-1} + \Theta_q \bm{\epsilon}_{t-q}.
\]
Define the augmented state vector $\bm{y}_t$ as
\[
    \bm{y}_t = P\bm{y}_{t-1}+U_t,  
\]
where
\[
    \bm{y}_t
    =\begin{pmatrix}
    \bm{Y}_t,\\
    \bm{Y}_{t-1},\\
    \vdots\\
    \bm{Y}_{t-p+1},\\
    \bm{\epsilon}_t, \\
    \bm{\epsilon}_{t-1},\\
    \vdots\\
    \bm{\epsilon}_{t-q+1} 
    \end{pmatrix}
    ,\quad
    P= 
    \begin{pmatrix}
    \Phi_1 & \cdots & \Phi_{p-1}  & \Phi_{p} & \Theta_1 & \cdots & \Theta_{q-1} & \Theta_q \\[1mm]
    I_n & \cdots & \bm{0} & \bm{0} & \bm{0} & \cdots & \bm{0} & \bm{0} \\[1mm]
    \vdots & \ddots & \vdots & \bm{0} & \vdots & \ddots & \vdots & \vdots \\[1mm]
    \bm{0} & \cdots & I_n & \bm{0} & \bm{0} & \cdots & \bm{0} & \bm{0} \\[1mm]
    \bm{0} & \cdots & \bm{0} & \bm{0} & \bm{0} & \cdots & \bm{0} & \bm{0} \\[1mm]
    \bm{0} & \cdots & \bm{0} & \bm{0} & I_n & \cdots & \bm{0} & \bm{0}  \\[1mm]
    \vdots & \ddots & \vdots & \vdots  & \vdots & \ddots  & \vdots & \vdots \\[1mm]
    \bm{0} & \cdots & \bm{0} & \bm{0} & \bm{0} & \cdots & I_n & \bm{0}  \\[1mm]
    \end{pmatrix}
    ,\quad
    U_t
    =\begin{pmatrix}
    \epsilon_t\\
    \bm{0}\\
    \vdots\\
    \bm{0}\\
    \epsilon_t \\
    \bm{0}\\
    \vdots\\
    \bm{0}
    \end{pmatrix}
    .
\]
Let $\Sigma_U= B \Sigma_\epsilon B^\top$, where $B = (I_n, \bm{0},...,\bm{0}, I_n, \bm{0},...,\bm{0})^\top$. 
Then the stationary covariance matrix of the augmented state, denoted by $S=\mathbb{E}[\bm{y}_t\bm{y}_t^\top]$, that satisfies
\[
\mathrm{vec} (S) = \left( I_{(p+q)^2n^2} - P \otimes P \right)^{-1} \mathrm{vec}(\Sigma_U),
\]
Then the covariance matrix $\Gamma_{\bm{Y}_t}(0)$ is given by the top-left $n\times n$ block matrix of $S$.

\textbf{Estimation of $\Gamma_{\bm Y_t}(0)$ for $\mathrm{VARMA}(1,1)$.}
For the special case of a $\mathrm{VARMA}(1,1)$ model, the covariance matrix $\Gamma_{\bm{Y}_t}(0)$can be obtained by considering an augmented process
that includes a second lag of $\bm{Y}_t$ with $\Phi_2=\bm{0}$; see, e.g.,  \citet{lutkepohl2005new}. Specifically, consider the process
\[
    \bm{Y}_t=\Phi_1 \bm{Y}_{t-1}+\Phi_2 \bm{Y}_{t-2}+\Theta_1 \bm{\epsilon}_{t-1}+\bm{\epsilon}_t.
\]
Define 
\[
    \bm{y}_t
        =\begin{pmatrix}
            \bm{Y}_{t}\\
            \bm{Y}_{t-1}\\
            \bm{\epsilon}_t 
        \end{pmatrix}
    ,\quad
    P
    =\begin{pmatrix}
            \Phi_1 & \bm{0} & \Theta_1 \\[1mm]
            I_n & \bm{0} &\bm{0} \\[1mm]
            \bm{0} & \bm{0} & \bm{0}
    \end{pmatrix}
    , \quad
    U_t
    =\begin{pmatrix} 
        \bm{\epsilon}_t\\ 
        \bm{0}\\
        \bm{\epsilon}_t 
    \end{pmatrix}
    ,\quad
    \Sigma_U
    =\begin{pmatrix}
        \Sigma_\epsilon & \bm{0} & \Sigma_\epsilon \\[1mm]
        \bm{0} & \bm{0} &\bm{0} \\[1mm]
        \Sigma_\epsilon & \bm{0} & \Sigma_\epsilon
    \end{pmatrix}
    .
\]
Then, $\Gamma_{\bm{Y}_t}(0)$ is characterized by
\[
    \mathrm{vec} 
            \begin{pmatrix}
                \Gamma_{\bm{Y}_t}(0) & \Gamma_{\bm{Y}_t}(1) & \Sigma_\epsilon \\[1mm]
                \Gamma_{\bm{Y}_t}(-1) & \Gamma_{\bm{Y}_t}(0) &\bm{0} \\[1mm]
                \Sigma_\epsilon & \bm{0} & \Sigma_\epsilon
            \end{pmatrix}
        = \left( I_{9n^2} - P \otimes P \right)^{-1} \mathrm{vec}(\Sigma_U).
\]

\section{Supplementary Material: Implementation of PTO, ETO, and FPtP}\label{append.method_alg}

\textbf{PTO.} The PTO approach follows the classical two-stage paradigm in which a predictive model is first trained to forecast the next-period realization of the uncertain parameter, and the resulting point prediction is subsequently used as a plug-in estimate in the optimization problem. We implement four predictive learners: a Recurrent Neural Network (RNN), a Long Short-Term Memory network (LSTM), a Random Forest (RF), and XGBoost (XGB). Our implementations rely on standard machine learning libraries with fixed hyperparameters, as summarized below.
\begin{itemize}
    \item PTO-RNN\,/\,PTO-LSTM: Implemented in PyTorch, with hidden dimension set to 32. Models are trained using the Adam optimizer with $10^{-3}$ learning rate and mean squared error loss.
    \item PTO-RF: Implemented using scikit-learn’s \texttt{RandomForestRegressor} with 100 trees, maximum depth 10, and all cores enabled.
    \item PTO-XGB: Implemented using 
    \texttt{XGBRegressor} with the configuration parameters (tree\_method="hist", objective="reg:squarederror",
    eval\_metric="rmse",
    n\_estimators=100,\,
    max\_depth=6,\,
    learning\_rate=0.1,\,
    subsample=0.8,\,
    colsample\_bytree=0.8,\,
    min\_child\_weight=1,\,
    seed=42).
\end{itemize}
Given a trained predictor $f_\theta$, the PTO decision is obtained by computing a point forecast $\hat{\bm Y}_{T+1} = f_\theta(\bm Y_{T-S+1:T})$, forming the corresponding matrix $D_2(\hat{\bm Y}_{T+1})$, and solving the resulting deterministic optimization problem. Algorithm~\ref{alg:pto} summarizes the procedure.

\begin{algorithm}
\caption{\textsc{MethodSolve}: PTO}
\label{alg:pto}
\begin{algorithmic}[1]
\REQUIRE
sample length $T$;
sample $\bm{Y}_{1:T}$;
sequence length $S$;
learner $f_\theta$ (e.g., RNN/LSTM);
learning rate $\eta$;
constants $\mu_0, D_1,\bm{e},\bm{x}^0$

\vspace{2mm}
\STATE Initialize model parameters $\theta$.

\vspace{1mm}
\FOR{$t = S$ {\bfseries to} $T$}
    \STATE Construct input-output pair
    $(\bm{Y}_{t-S:t-1},\, \bm{Y}_t)$.
    \STATE Update $\theta$ using Adam optimizer on loss $\big\| f_\theta(\bm{Y}_{t-S:t-1}) - \bm{Y}_t \big\|_2^2$
\ENDFOR

\vspace{1mm}
\STATE $\hat{\bm Y}_{T+1} \gets f_\theta(\bm{Y}_{T-S+1:T})$
\STATE $D_2^{\scriptscriptstyle{\text{PTO}}} \gets D_2(\hat{\bm{Y}}_{T+1})$
\RightComment{via \eqref{eq.D2}}
\STATE $\bm x_{\scriptscriptstyle{\text{PTO}}} \gets \left( D_1 + D_2^{\scriptscriptstyle{\text{PTO}}} \right)^{-1} \left( \mu_0 D_1 \bm e + D_2^{\scriptscriptstyle{\text{PTO}}} \bm x^0 \right)$

\vspace{2mm}
\STATE {\bfseries Return } $\bm{x}_{\scriptscriptstyle{\text{PTO}}}$
\end{algorithmic}
\end{algorithm}

\textbf{ETO.}
The ETO approach first estimates the underlying data-generating process via maximum likelihood estimation and then optimizes with respect to the estimated distribution. Algorithm~\ref{alg:eto} summarizes the procedure.

\begin{algorithm}
\caption{\textsc{MethodSolve}: ETO}
\label{alg:eto}
\begin{algorithmic}[1]
\REQUIRE
sample length $T$;
sample $\bm{Y}_{1:T}$;
constants $\mu_0, D_1,\bm{e},\bm{x}^0$

\vspace{2mm}
\STATE $\hat{\bm{\xi}}_{\scriptscriptstyle{\text{ETO}}} \gets \arg\max_{\bm{\xi}} L(\bm{Y}_{1:T};\bm{\xi})$
\RightComment{via \eqref{eq.lkh_11}}

\vspace{1mm}
\STATE $D_2^{\scriptscriptstyle{\text{ETO}}} \gets \mathbb{E}_{\bm{Y}_{T+1}|\hat{\bm{\xi}}_{\scriptscriptstyle{\text{ETO}}}} [D_2(\bm{Y}_{T+1})]$
\RightComment{via \eqref{eq.expected_D2}}
\STATE $x_{\scriptscriptstyle{\text{ETO}}} = \left( D_1 + D_2^{\scriptscriptstyle{\text{ETO}}} \right)^{-1} \left( \mu_0 D_1 \bm e + D_2^{\scriptscriptstyle{\text{ETO}}} \bm x^0 \right)$

\vspace{2mm}
\STATE {\bfseries Return } $\bm{x}_{\scriptscriptstyle{\text{ETO}}}$
\end{algorithmic}
\end{algorithm}

\textbf{FPtP.} We study the FPtP approach using two tree-based learners: Random Forest (hereafter denoted as `FPtP-RF') and XGBoost (hereafter denoted as `FPtP-XGB'). We do not implement FPtP for neural network architectures (e.g., RNNs or LSTMs), and therefore do not consider the corresponding ETO models. The reason is methodological rather than computational: \citet{bertsimas2020predictive} do not provide a principled strategy for generating multiple predictive outcomes beyond the PTO framework. While several heuristic approaches have been discussed in the literature (e.g., injecting noise into covariates), there is currently no consensus on a theoretically grounded strategy. To avoid drawing potentially misleading conclusions, we deliberately exclude these models from our analysis. Algorithm~\ref{alg:fptp} summarizes the FPtP procedure considered in this paper.

\underline{FPtP-RF.} For Random Forests, the $M$ predictive samples correspond to the outputs of the individual trees, denoted by $\hat{\bm Y}^{(m)}_{T+1}$, given a new observation $\bm z$ or the historical window $\bm{Y}_{T-S+1:T}$ in the pesudo code. In this setting, each base learner $f^{(m)}$ corresponds to a single decision tree. \citet{bertsimas2020predictive} propose two weighting schemes: (i) Uniform weights, where $w_m(\bm z)=1/M$ for all $m$ and we denote this variant `FPtP-RF-U'; (ii) Voting weights, based on each tree’s contribution to the standard ensemble prediction, which we denote as `FPtP-RF-V' (`V' for voting). 

\underline{FPtP-XGB.} We adopt XGBoost as a representative boosting method. Following \citet{bertsimas2020predictive}, multiple predictive samples are generated by considering prefix models obtained after the first $m$ boosting rounds. Accordingly, $\hat{\bm Y}^{(m)}_{T+1}$ denotes the prediction produced at iteration $m$, and we employ uniform weights $w_m(\bm z)=1/M$. In this case, $f^{(m)}$ represents the prefix model containing the first $m$ boosted trees. 

\begin{algorithm}
\caption{\textsc{MethodSolve}: FPtP}
\label{alg:fptp}
\begin{algorithmic}[1]
\REQUIRE
sample length $T$;
sample $\bm{Y}_{1:T}$;
sequence length $S$;
tree-based learner $f_\theta$ (e.g., RF/XGB);
number of ensembles $M$;
constants $\mu_0, D_1,\bm{e},\bm{x}^0$

\vspace{2mm}
\STATE $\mathcal{D} \gets \{(\bm{Y}_{t-S:t-1},\bm{Y}_t)\}_{t=S}^{T}$
\STATE $\theta \gets \argmin_{\theta} \sum_{(\bm u,\bm v)\in\mathcal{D}}\big\|f_{\theta}(\bm u)-\bm v\big\|_2^2$
\STATE $f_{\theta} \gets \{f^{(m)}\}_{m=1}^{M}$

\vspace{1mm}
\FOR{$m=1$ {\bfseries to} $M$}
    \STATE $\hat{\bm Y}^{(m)}_{T+1} \gets f^{(m)}(\bm{Y}_{T-S+1:T})$ 
\ENDFOR

\vspace{1mm}
\STATE $D_2^{\scriptscriptstyle{\text{FPtP}}} \gets \frac{1}{M}\sum_{m=1}^M D_2(\hat{\bm{Y}}_{T+1}^{(m)})$
\RightComment{via \eqref{eq.D2}}
\STATE $\bm x_{\scriptscriptstyle{\text{FPtP}}} \gets \left( D_1 + D_2^{\scriptscriptstyle{\text{FPtP}}} \right)^{-1} \left( \mu_0 D_1 \bm e + D_2^{\scriptscriptstyle{\text{FPtP}}} \bm x^0 \right)$

\vspace{2mm}
\STATE {\bfseries Return } $\bm{x}_{\scriptscriptstyle{\text{FPtP}}}$
\end{algorithmic}
\end{algorithm}

\section{Supplementary Material: Details of Experimental Setting}\label{append.exp}
\subsection{Computational strategies}

\textbf{Assumption on $\bm \xi$.} 
For a $\mathrm{VARMA}(p,q)$ model with parameters $\bm{\xi} = (\{\Phi_i\}_{i=1}^p, \{\Theta_i\}_{i=1}^q, \Sigma_\epsilon)$,
we assume that $\bm{\xi}$ satisfies Assumptions~\ref{assump: causal_invert} and~\ref{assump:sym_cov}.
The stationarity and invertibility conditions in Assumption~\ref{assump: causal_invert} are ensured by requiring that all eigenvalues of the companion matrices $M_\Phi$ and $M_\Theta$ lie strictly
inside the unit circle, i.e., $|\lambda_i(M_\Phi)| < 1$ and $|\lambda_i(M_\Theta)| < 1$, where
\[
M_\Phi=
\begin{pmatrix}
\Phi_1 & \Phi_2 & \Phi_3 & \cdots & \Phi_p\\
I_k & 0 & 0 & \cdots & 0\\
0 & I_k & 0 & \cdots & 0\\
\vdots & & \ddots & & \vdots\\
0 & 0 & \cdots & I_k & 0
\end{pmatrix} 
\quad\text{and}\quad
M_\Theta=
\begin{pmatrix}
-\Theta_1 & -\Theta_2 & -\Theta_3 & \cdots & -\Theta_q\\
I_k & 0 & 0 & \cdots & 0\\
0 & I_k & 0 & \cdots & 0\\
\vdots & & \ddots & & \vdots\\
0 & 0 & \cdots & I_k & 0
\end{pmatrix}.
\]

To guarantee Assumption~\ref{assump:sym_cov}, we further impose that all components of $\bm{\xi}$ are symmetric and mutually commutative.
\begin{lemma}
\label{lemma:sym_cov}
If the parameters $\bm{\xi}$ are symmetric and commutative, then
Assumption~\ref{assump:sym_cov} holds; that is,
\[
\Phi^{-1}(z)\Theta(z)\Sigma_\epsilon
\Theta^{\top}(z^{-1})\Phi^{\top-1}(z^{-1})
=
\Phi^{-1}(z^{-1})\Theta(z^{-1})\Sigma_\epsilon
\Theta^{\top}(z)\Phi^{\top-1}(z).
\]
\end{lemma}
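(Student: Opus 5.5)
The idea is to work in the commutative algebra generated by the parameters. Let $\mathcal{A}$ be the set of all matrix polynomials (with complex coefficients) in $\Phi_1,\dots,\Phi_p,\Theta_1,\dots,\Theta_q,\Sigma_\epsilon$. By hypothesis these generators pairwise commute, so $\mathcal{A}$ is a commutative algebra. Each generator is symmetric, so the transpose of a product of generators is the same product in reversed order. By commutativity this reversed product equals the original one. Hence every element of $\mathcal{A}$ with real coefficients is symmetric.

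First I handle the transposes. For real $z$, $\Phi(z)$ and $\Theta(z)$ lie in $\mathcal{A}$, so $\Phi^\top(z)=\Phi(z)$ and $\Theta^\top(z)=\Theta(z)$. For complex $z$ the plain transpose (not conjugate transpose) still acts entrywise on the matrix polynomial, and the coefficient matrices are symmetric. So $\Theta^\top(w)=I_n+\sum_i\Theta_i^\top w^i=\Theta(w)$ for every $w\in\mathbb{C}$, and likewise $\Phi^\top(w)=\Phi(w)$.

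Next I handle the inverses. By Assumption~\ref{assump: causal_invert}, $\det\Phi(w)\neq 0$ for $|w|\le 1$. For $|z|=1$ both $z$ and $z^{-1}$ lie on the unit circle, so $\Phi(z)$ and $\Phi(z^{-1})$ are invertible. I need the inverse of an invertible element $M\in\mathcal{A}$ to commute with everything in $\mathcal{A}$. This follows from Cayley--Hamilton: $M^{-1}$ is a polynomial in $M$, hence lies in $\mathcal{A}$. Alternatively, from $MA=AM$ one gets $AM^{-1}=M^{-1}A$ directly. Also $\Phi^{\top-1}(w)=(\Phi^\top(w))^{-1}=\Phi^{-1}(w)$.

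With these facts the identity follows. The left-hand side becomes
\[
\Phi^{-1}(z)\Theta(z)\Sigma_\epsilon\Theta(z^{-1})\Phi^{-1}(z^{-1}).
\]
This is a product of five pairwise commuting matrices. The right-hand side becomes
\[
\Phi^{-1}(z^{-1})\Theta(z^{-1})\Sigma_\epsilon\Theta(z)\Phi^{-1}(z),
\]
which is a product of the same five factors. By commutativity the two products are equal, which establishes Assumption~\ref{assump:sym_cov}.

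The only step needing care is the complex-argument issue. One must use the plain transpose, as in the assumption, and verify that symmetry of the coefficients gives $\Theta^\top(w)=\Theta(w)$ for complex $w$. One must also justify the invertibility of $\Phi(z^{-1})$ on $|z|=1$ via causality. Everything else is bookkeeping with commuting factors.
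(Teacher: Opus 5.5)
Your argument is correct, but it takes a different route from the paper. The paper uses simultaneous orthogonal diagonalization. Because the parameters are symmetric and pairwise commuting, a single orthogonal $P$ makes $P^\top\Phi_j P$, $P^\top\Theta_i P$ and $P^\top\Sigma_\epsilon P$ all diagonal. Both sides then become $P(\cdots)P^\top$ with a product of diagonal matrices in the middle, and diagonal matrices commute. You instead stay inside the commutative algebra generated by the coefficients. There, symmetry is used only to remove the plain transposes, giving $\Theta^\top(w)=\Theta(w)$ and $\Phi^{\top-1}(w)=\Phi^{-1}(w)$ for all complex $w$. The elementary fact that $MA=AM$ implies $M^{-1}A=AM^{-1}$ keeps the inverses inside the commuting family. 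The two sides are therefore the same five commuting factors in a different order.

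Your route avoids the spectral theorem and separates the roles of the two hypotheses: symmetry handles the transposes and commutativity handles the reordering. It also makes explicit two points the paper leaves implicit. First, invertibility of $\Phi(z)$ and $\Phi(z^{-1})$ for $|z|=1$ comes from Assumption~\ref{assump: causal_invert}; the paper writes $P D_\Phi^{-1}(z)P^\top$ without comment. Second, the transposes are plain, not conjugate, transposes. The paper's diagonalization, for its part, produces the explicit eigenbasis form $\Phi=P\Lambda_\Phi P^\top$, $\Sigma_\epsilon=P\Lambda_{\Sigma_\epsilon}P^\top$. That form is reused immediately afterwards in Proposition~\ref{prop.sym_xi} to simplify the likelihood.
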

\begin{proof}[Proof of Lemma~\ref{lemma:sym_cov}]
Since $\bm{\xi}$ are symmetric and commutative, they can be simultaneously
diagonalized by an orthogonal matrix $P$, such that
\[
\Lambda_{\Phi_j} = P^\top \Phi_j P, \quad \Lambda_{\Theta_i} = P^\top \Theta_i P, \quad \Lambda_{\Sigma_\epsilon} = P^\top \Sigma_\epsilon P.
\]
Define
\[
P^\top \Phi(z) P 
= I-\sum_{j=1}^p \Lambda_{\Phi_j} z^j
:=D_\Phi(z) , \quad
P^\top \Theta(z) P 
= I+\sum_{i=1}^q \Lambda_{\Theta_i} z^i
:=D_\Theta(z).
\]
Then
\[
\Phi(z) = P D_\Phi(z) P^\top, \quad
\Phi^{-1}(z) = (P D_\Phi(z) P^\top)^{-1} = P D_\Phi^{-1}(z) P^\top
\]
and
\[
\Theta(z) = P D_\Theta(z) P^\top, \quad
\Theta(z^{-1}) =  P D_\Theta(z^{-1}) P^\top.
\]
Using these representations, the left-hand side becomes
\[
\begin{aligned}
\text{LHS}
&= \Phi^{-1}(z)\Theta(z)\Sigma\Theta^{\top}(z^{-1})\Phi^{\top-1}(z^{-1}) \\
&= P D_\Phi^{-1}(z) P^\top P D_\Theta(z) P^\top  P \Lambda_{\Sigma_\epsilon} P^\top P  D_\Theta(z^{-1}) P^\top P D_\Phi^{-1}(z^{-1}) P^\top \\
&= P \left( D_\Phi^{-1}(z) D_\Theta(z) \Lambda_{\Sigma_\epsilon} D_\Theta(z^{-1}) D_\Phi^{-1}(z^{-1}) \right)P^\top,
\end{aligned}
\]
while the right-hand side is
\[
\begin{aligned}
\text{RHS}
&= \Phi^{-1}(z^{-1})\Theta(z^{-1})\Sigma\Theta^{\top}(z)\Phi^{\top-1}(z) \\
&= P D_\Phi^{-1}(z^{-1}) P^\top P D_\Theta(z^{-1}) P^\top P \Lambda_{\Sigma_\epsilon} P^\top P D_\Theta(z) P^\top P  P^\top \\
&= P \left( D_\Phi^{-1}(z^{-1})  D_\Theta(z^{-1}) \Lambda_{\Sigma_\epsilon} D_\Theta(z) D_\Phi^{-1}(z) \right)P^\top.
\end{aligned}
\]
Since all matrices inside the parentheses are diagonal and therefore commute,
we conclude that $\text{LHS} = \text{RHS}$.
\end{proof}

\textbf{Likelihood under symmetric and commutative $\bm \xi$.} 
Here, We specialize to a $\mathrm{VARMA}(1,1)$ process with parameter $\bm{\xi} = (\Phi, \Theta, \Sigma_\epsilon)$. Suppose that $\bm{\xi}$ takes the symmetric and commutative form
\begin{equation}\label{eq.sym_xi}
    \Theta = \theta I_n, \quad
    \Phi = P \Lambda_{\Phi} P^{\top}, \quad
    \Sigma_{\epsilon}=P \Lambda_{\Sigma_{\epsilon}} P^{\top}
\end{equation}
where $P$ is an orthogonal matrix. 

Under this restriction, the likelihood in Appendix~\ref{append.proofs_varma11} admits a
simplified expression.

\begin{proposition}\label{prop.sym_xi}
Under restriction~\eqref{eq.sym_xi}, the likelihood
$L(\bm{Y}; \bm{\xi})$ in Appendix~\ref{append.proofs_varma11} can be written as
\[
    L(\bm{Y};\bm{\xi})=g_0(\bm{Y}) g_1(\hat{\bm{\xi}}(\bm{Y}),\bm{\xi}),
\]
where 
\[
    g_0(\bm{Y})= (2\pi)^{-\frac{nT}{2}}, \quad 
    g_1(\hat{\bm{\xi}}(\bm{Y}),\bm{\xi})=\left(\prod_{t=1}^{T} \det(\tilde{\Sigma}_t) \right)^{-\frac{1}{2}} \exp\left\{-\frac{1}{2} h(\hat{\bm{\xi}}(\bm{Y}),P, \theta, \Lambda_\Phi, \Lambda_{\Sigma_\epsilon}) \right\}.
\]
More specifically,
\[
\begin{aligned}
    &h(\hat{\bm{\xi}},P, \theta, \Lambda_\Phi, \Lambda_{\Sigma_\epsilon})
    =(\tilde{\bm{\alpha}}_1 - \Lambda_{\Phi} \tilde{\bm{\beta}}_1)^\top \tilde{\Sigma}_1^{-1} (\tilde{\bm{\alpha}}_1 - \Lambda_{\Phi} \tilde{\bm{\beta}}_1) \\ 
    &\quad + \sum_{t=2}^{T-1} \left( \tilde{\bm{\alpha}}_t - \Lambda_{\Phi} \tilde{\bm{\beta}}_t + \sum_{j=0}^{t-2} C_{j,t} \Lambda_{\Phi} \tilde{\bm{\beta}}_{t-1-j} \right)^\top \tilde{\Sigma}_t^{-1} \left( \tilde{\bm{\alpha}}_t - \Lambda_{\Phi} \tilde{\bm{\beta}}_t + \sum_{j=0}^{t-2} C_{j,t} \Lambda_{\Phi} \tilde{\bm{\beta}}_{t-1-j} \right) \\
    &\quad + \left( \tilde{\bm{\alpha}}_T + \Theta_{T-1,1} \Lambda_{\Phi} \tilde{\bm{\beta}}_{T-1} - \Theta_{T-1,1}\sum_{j=0}^{T-3} C_{j,T-1} \Lambda_{\Phi} \tilde{\bm{\beta}}_{T-2-j} \right)^\top \tilde{\Sigma}_T^{-1}  \\
    &\qquad \qquad \qquad \qquad \qquad \qquad \qquad
    \left( \tilde{\bm{\alpha}}_T + \Theta_{T-1,1} \Lambda_{\Phi} \tilde{\bm{\beta}}_{T-1} - \Theta_{T-1,1}\sum_{j=0}^{T-3} C_{j,T-1} \Lambda_{\Phi} \tilde{\bm{\beta}}_{T-2-j} \right).
\end{aligned}
\]
The covariance determinants and inverses are given by
\[
    \det(\tilde{\Sigma}_t)= 
    \begin{cases}
    \det(s_t \Lambda_{\Sigma_{\epsilon}}), &\quad t\in[T-1] \\
    \det(\Lambda_{\Gamma_{\bm{Y}}(0)}-\frac{\theta^2}{s_{T-1}} \Lambda_{\Sigma_{\epsilon}}), &\quad t=T
    \end{cases}
    \,\, \text{and} \,\,
    \tilde{\Sigma}_t^{-1} 
    = 
    \begin{cases}
    \frac{1}{s_t} \Lambda_{\Sigma_{\epsilon}}^{-1}, &\quad t\in[T-1] \\
    (\Lambda_{\Gamma_{\bm{Y}}(0)}-\frac{\theta^2}{s_{T-1}} \Lambda_{\Sigma_{\epsilon}})^{-1}, &\quad t=T.
    \end{cases}
\]
Here,
\[
    \begin{aligned}
        & s_1 = 1+\theta^2, \quad 
            s_t = s_1-\theta d_t, \quad
            d_t = \frac{\theta}{s_{t-1}}, \quad
            \Theta_{t-1,1} = d_t I_n, &&\quad \text{for } t\in[2,T], \\
        & \bm{\alpha}_1=\bm{Y}_1, \quad
        \bm{\alpha}_t=\bm{Y}_t-d_t\bm{\alpha}_{t-1}, \quad 
        \tilde{\bm{\alpha}}_t = P^\top \bm{\alpha}_t, &&\quad \text{for } t\in[2,T], \\
        & \bm{\beta}_t=\bm{Y}_{t+1}, \quad
        \tilde{\bm{\beta}}_t = P^\top \bm{\beta}_t, &&\quad \text{for } t\in[T-1], \\
        & C_{j,t}=(-1)^{2t-j} d_t\times \cdots \times d_{t-j} I_n, &&\quad \text{for } j\in [0,T-3], t\in[2,T-1].
    \end{aligned}
\]
\end{proposition}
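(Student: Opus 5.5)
The plan is to start from the VARMA$(1,1)$ likelihood in Appendix~\ref{append.proofs_varma11} and conjugate every object by the orthogonal matrix $P$. Under restriction~\eqref{eq.sym_xi}, all parameter matrices are simultaneously diagonalized by $P$, so $\Theta_{t,1}$, $\Sigma_t$, $C_{j,t}$ and $\Gamma_{\bm Y}(0)$ all become $P$-conjugates of diagonal matrices. The argument proceeds in three steps.

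\emph{Step 1: the innovation recursion.} Since $\Theta=\theta I_n$ and $\Sigma_\epsilon=P\Lambda_{\Sigma_\epsilon}P^\top$, we get $\Gamma_{\bm W}(t,t)=(1+\theta^2)\Sigma_\epsilon=s_1\Sigma_\epsilon$ for $t\le T-1$ and $\Gamma_{\bm W}(t,t-1)=\theta\Sigma_\epsilon$. I would prove by induction on $t\in[T-1]$ that $\Sigma_t=s_t\Sigma_\epsilon$ and $\Theta_{t,1}=\Sigma_\epsilon\Theta^\top\Sigma_t^{-1}=(\theta/s_t)I_n=d_{t+1}I_n$.
\begin{itemize}
\item The base case is immediate.
\item For the inductive step, the recursion gives $\Sigma_t=s_1\Sigma_\epsilon-d_t^2 s_{t-1}\Sigma_\epsilon=(s_1-\theta d_t)\Sigma_\epsilon=s_t\Sigma_\epsilon$.
\end{itemize}
At $t=T$ we get $\Sigma_T=\Gamma_{\bm Y}(0)-d_T^2s_{T-1}\Sigma_\epsilon=\Gamma_{\bm Y}(0)-\tfrac{\theta^2}{s_{T-1}}\Sigma_\epsilon$. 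For this step I need $\Gamma_{\bm Y}(0)=P\Lambda_{\Gamma_{\bm Y}(0)}P^\top$. This follows because $\Gamma_{\bm Y}(0)=\sum_k\Psi_k\Sigma_\epsilon\Psi_k^\top$ with $\Psi_k$ polynomials in $\Phi$ and $\theta$ (the causal MA$(\infty)$ weights), all of which commute with $P\Lambda P^\top$ structures. Equivalently, one can use the vec-formula in Appendix~\ref{append.gamma_y} after conjugation. Writing $\tilde\Sigma_t=P^\top\Sigma_tP$ then yields the stated $\det(\tilde\Sigma_t)$ and $\tilde\Sigma_t^{-1}$, since $\det$ is invariant under orthogonal conjugation.

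\emph{Step 2: the quadratic form.} The scalar form of $\Theta_{t-1,1}$ means $\bm\alpha_t=\bm Y_t-d_t\bm\alpha_{t-1}$ and $C_{j,t}$ is the stated scalar multiple of $I_n$, so both commute with $P$. Substituting into the expression for $\bm R_t$, we have $P^\top\Phi\bm\beta=\Lambda_\Phi P^\top\bm\beta=\Lambda_\Phi\tilde{\bm\beta}$. Hence $P^\top\bm R_t$ equals the bracketed vectors in $h$, with $\tilde{\bm\alpha}_t,\tilde{\bm\beta}_t$ in place of $\bm\alpha_t,\bm\beta_t$. Then $\bm R_t^\top\Sigma_t^{-1}\bm R_t=(P^\top\bm R_t)^\top\tilde\Sigma_t^{-1}(P^\top\bm R_t)$, and summing over $t$ gives $h$.

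\emph{Step 3: assembly.} Combining Steps 1 and 2 with the decomposition of Theorem~\ref{thm.varma_pq}, in its VARMA$(1,1)$ form from Appendix~\ref{append.proofs_varma11}, gives $g_0$ and $g_1$ as stated.

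The main obstacle is bookkeeping rather than a conceptual difficulty. I must verify the index shift $\Theta_{t-1,1}=d_tI_n$ consistently with $d_t=\theta/s_{t-1}$, and check the sign conventions in $C_{j,t}$ against Appendix~\ref{append.proofs_varma11}. I also need to justify the diagonalizability of $\Gamma_{\bm Y}(0)$ by $P$, for which I would use the commuting-series argument.
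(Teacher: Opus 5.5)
Your proposal is correct and follows essentially the same route as the paper. In both, the restriction is substituted into the VARMA$(1,1)$ innovation recursion to obtain $\Theta_{t-1,1}=d_tI_n$ and $\Sigma_t=s_t\Sigma_\epsilon$ for $t\in[T-1]$, with the modified term at $t=T$; then $\Sigma_t$, $\bm\alpha_t$, $\bm\beta_t$ are conjugated by $P$, using the invariance of determinants and quadratic forms. Your explicit induction, and your argument that $P^\top\Gamma_{\bm Y}(0)P$ is diagonal because the causal MA$(\infty)$ weights are polynomials in $\Phi$, supply details that the paper's terse proof leaves implicit.
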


\begin{proof}[Proof of Proposition~\ref{prop.sym_xi}]
Substituting~\eqref{eq.sym_xi} into the innovation covariance matrices $\Sigma_t$ in Appendix~\ref{append.proofs_varma11} yields $\Theta_{t-1,1} = d_t I_n$ and 
\[
\Sigma_t = 
\begin{cases}
 s_t \Sigma_{\epsilon}, &\quad t\in[T-1] \\
 \Gamma_{\bm{Y}}(0)-\frac{\theta^2}{s_{T-1}} \Sigma_{\epsilon}, &\quad t=T
\end{cases}
\]
Since orthogonal transformations preserve determinants and quadratic forms, we apply $P^\top(\cdot)P$ to $\Sigma_t$, $\bm{\alpha}_t$, and $\bm{\beta}_t$, and obtain
\[
\tilde{\Sigma}_t = P^\top \Sigma_t P =
\begin{cases}
s_t \Lambda_{\Sigma_\epsilon}, & t \in [T-1], \\[1mm]
\Lambda_{\Gamma_{\bm{Y}}(0)}
- \dfrac{\theta^2}{s_{T-1}} \Lambda_{\Sigma_\epsilon}, & t = T.
\end{cases}
\]

The stated expressions for $\tilde{\Sigma}_t^{-1}$,
$\det(\tilde{\Sigma}_t)$, and the quadratic form
$h(\cdot)$ then follow directly.
\end{proof}

\subsection{Evaluation on Synthetic Data}\label{append.exp.syn}

\textbf{Generating oracle parameter $\bm \xi$ and sample $\bm Y$.} 

\underline{Well-specified setting.}
Each oracle parameter
$\bm{\xi} = (\Phi, \Theta, \Sigma_\epsilon)$
is generated according to the following procedure:
\begin{itemize}
    \item Construct an invertible MA matrix
    $\Theta = \theta I_n$ with eigenvalue $|\theta| \le 0.9$.
    \item Sample an orthogonal matrix $P$ uniformly at random.
    \item Construct a stationary AR matrix
    $\Phi = P \Lambda_{\Phi} P^\top$ with eigenvalues
    $|\lambda_{\Phi}| < 0.9$.
    \item Construct a positive definite covariance matrix
    $\Sigma_{\epsilon} = P \Lambda_{\Sigma_{\epsilon}} P^\top$
    with eigenvalues $\lambda_{\Sigma_\epsilon} \in (0.1, 0.9)$.
\end{itemize}

To generate a diverse collection of oracle parameters, we initialize a reference parameter $\bm{\xi}^0$ and construct a symmetric prior distribution centered at $\bm{\xi}^0$. Candidate parameters are weighted according to their squared Frobenius distance from $\bm{\xi}^0$. Specifically, we define
\[
u_k
=
\frac{\exp(-d_k)}{\sum_j \exp(-d_j)},
\qquad
d_k
=
\|\Theta_k - \Theta^0\|_F^2
+
\|\Phi_k - \Phi^0\|_F^2
+
\|\Sigma_{\epsilon,k} - \Sigma_\epsilon^0\|_F^2 .
\]

To construct this prior, we first sample $N_t = 10,000$ candidate parameters $(\bm{\xi}_1, \ldots, \bm{\xi}_{N_t})$ in a neighborhood of $\bm{\xi}^0$, retaining only those that satisfy Assumptions~\ref{assump: causal_invert} and~\ref{assump:sym_cov}. The normalized weights
$\bm{u} = (u_1, \ldots, u_{N_t})$ then define a discrete prior over the candidate set.

For evaluation, we draw $N_o = 50$ oracle parameters independently from this prior. In addition, the OVE method requires Monte Carlo sampling from the same prior. To avoid overlap between testing parameters and Monte Carlo samples, we draw a separate set of $N_{\text{ove}}$ parameters
$(\bm{\xi}_1^{\scriptscriptstyle{\text{OVE}}}, \ldots,
 \bm{\xi}_{N_{\text{ove}}}^{\scriptscriptstyle{\text{OVE}}})$
exclusively for OVE estimation.

Given an oracle parameter $\bm{\xi}$, we generate $N_s$ independent sample paths
$\bm{Y}_{1:T}$ of length $T = 25$.

\input{tables/config_param}

\underline{Mis-specified setting.}
Oracle parameters $\bm{\xi} = (\{\Phi_i\}_{i=1}^p, \{\Theta_i\}_{i=1}^q, \Sigma_\epsilon)$ are generated from a $\mathrm{VARMA}(p,q)$ process using a procedure analogous to that of the $\mathrm{VARMA}(1,1)$ case:
\begin{itemize}
    \item Construct invertible MA matrices
    $\Theta_i = \theta_i I_n$ with $|\theta_i| \le 0.9$.
    \item Sample an orthogonal matrix $P$ uniformly at random.
    \item Construct stationary AR matrices
    $\Phi_i = P \Lambda_{\Phi_i} P^\top$ with
    $|\lambda_{\Phi_i}| < 0.9$.
    \item Construct a positive definite covariance matrix
    $\Sigma_{\epsilon} = P \Lambda_{\Sigma_{\epsilon}} P^\top$
    with eigenvalues $\lambda_{\Sigma_\epsilon} \in (0.1, 0.9)$.
\end{itemize}

The prior weights are defined as

\[
u_k
=
\frac{\exp(-d_k)}{\sum_j \exp(-d_j)},
\qquad
d_k
=
\sum_{i=1}^q\|\Theta_{k,i} - \Theta^0\|_F^2
+
\sum_{i=1}^p\|\Phi_{k,i} - \Phi^0\|_F^2
+
\|\Sigma_{\epsilon,k} - \Sigma_\epsilon^0\|_F^2 .
\]

Given an oracle parameter $\bm{\xi}$, we again generate $N_s$ independent sample paths $\bm{Y}_{1:T}$ of length $T = 25$. 

Since we focus on small model mis-specification, the learning model continues to assume a $\mathrm{VARMA}(1,1)$ structure. To construct the prior distribution over OVE parameters, for each $\mathrm{VARMA}(p,q)$ oracle with weight $u_k$, we first generate $N_{\text{sove}}$ sample paths of length $T = 25$. We then estimate the corresponding $\mathrm{VARMA}(1,1)$ parameters via maximum likelihood estimation and assign each estimate a prior weight of $u_k / N_{\text{sove}}$.

\input{tables/mis}

\textbf{Portfolio optimization setup.}
For all synthetic experiments, The total fund size is set to $A = 1$, and the initial portfolio position $\bm{x}^0 \in (0,1)^n$ is drawn independently from a uniform distribution. The expected excess return vector $\bm{e}$ is generated independently with each component sampled uniformly from $[0.05,\,0.15]$. The risk-aversion parameter is fixed at $\gamma = 0.1$, and the return variance is set to $\delta^2 = 0.1$.
In addition, we set $\mu_2 = 0.1$ throughout all synthetic experiments.
Unless otherwise specified, these parameters are held fixed across both
well-specified and mis-specified settings.

\textbf{Discussion on results.} 
Tables~\ref{tab:well} and~\ref{tab:mis} summarize the performance of all methods under the well-specified and mis-specified settings, respectively, in terms of computational time, predictive accuracy (MSE), and decision quality (relative regret). 

In the well-specified setting (Table~\ref{tab:well}), for the case $n=10$, the computational cost of ETO becomes prohibitive when directly maximizing the likelihood defined in OVE to obtain the MLE. Therefore, for ETO under $n=10$, we instead estimate the model parameters using the \texttt{VARMAX} implementation in the \texttt{statsmodels} package.

\subsection{Evaluation on Real Data}\label{append.exp.real}

\textbf{Real-data experimental setup.}
We evaluate the proposed methods using real-world financial data from the
\emph{World Stock Prices (Daily Updating)} dataset, obtained from
\url{https://www.kaggle.com/datasets/nelgiriyewithana/world-stock-prices-daily-updating}.
Our analysis focuses on four stocks—GOOGL, HMC, AXP, and BAMXF—over the sample period from January~1,~2010, to January~1,~2019, excluding the COVID-19 period to avoid the influence of extreme market volatility. 

For each stock, we define the dollar trading volume as the product of the
reported trading volume and the OHLC4 price, where OHLC4 is computed as the
average of the open, high, low, and close prices.
The data are aggregated at a biweekly frequency using a rolling window of
length $\tau=10$, and a logarithmic transformation is applied to obtain the
log biweekly dollar trading volume series used in the empirical analysis.

The portfolio consists of $n=4$ assets with an initial allocation
$\bm{x}^0 \in (0,1)^n$ drawn uniformly at random. We set the total fund size to $A=1$ and consider an observation horizon of length $T=10$. The mean excess return vector $\bm{e}$ and return variance are estimated from historical data, with the average variance denoted by $\delta^2$. The risk-aversion parameter is fixed at $\gamma=0.1$ throughout the experiments.

\textbf{Stationarity and $\mathrm{VARMA}$ model validation.}
To verify the stationarity assumption required by the $\mathrm{VARMA}$ framework, we conduct both Augmented Dickey--Fuller (ADF) and KPSS tests on each transformed series. As reported, all four stocks reject the unit root null hypothesis under the ADF test at conventional significance levels, while failing to reject the null of stationarity under the KPSS test. These results jointly indicate that the log biweekly dollar trading volume series are stationary.

\input{tables/aic}

We further assess the appropriateness of the $\mathrm{VARMA}(1,1)$ specification by comparing it with alternative $\mathrm{VARMA}(p,q)$ models using standard information criteria, including AIC, BIC, and HQIC. Among the candidate models considered, $\mathrm{VARMA}(1,1)$ achieves the lowest AIC and provides a favorable trade-off between model fit and complexity. This empirical evidence supports the modeling assumption adopted in the main text that the multivariate time series of log biweekly dollar trading volume aggregates is well described by a stationary $\mathrm{VARMA}(1,1)$ process.

\textbf{Rolling Evaluation on Real Data.} 
We evaluate all methods on real-world financial time series using a rolling out-of-sample procedure designed to closely mimic practical decision-making settings.
The overall evaluation pipeline is summarized in Algorithm~\ref{alg:real} and consists of four main components: time-series aggregation, model training, rolling testing, and regret computation.

First, we preprocess the raw daily dollar trading volume data by aggregating it into $\tau$ shifted log biweekly time series, as described in Algorithm~\ref{alg:real_agg}. Second, for each aggregated series, we train all baseline models—including PTO, ETO, and FPtP—using a fixed training horizon, as detailed in Algorithm~\ref{alg:real_train}.
For OVE, we construct its parameter support by applying ETO to rolling training windows and then generate a prior distribution over the resulting parameter candidates via cross-validated kernel density estimation (Algorithm~\ref{alg:real_prior}). Third, model performance is evaluated using a rolling testing window.
At each testing step, all methods generate decisions based on historical observations of length $T$, and their realized costs are evaluated using a sample-average approximation (SAA) of the true objective. An oracle decision is computed by directly minimizing the same SAA objective, serving as a benchmark. Finally, performance is summarized by the average relative regret across all rolling testing windows, defined as the relative gap between each method’s realized cost and the oracle cost.

\input{algorithm/real}
\input{algorithm/real_agg}
\input{algorithm/real_train}
\input{algorithm/real_prior}

%% file: tables/config_param.tex
\begin{table}[ht]
\caption{Configuration parameter setting.}
\label{tab:config}
\vskip -0.15in
\renewcommand{\arraystretch}{1.2}
\begin{center}
\begin{small}
\begin{sc}
\begin{tabular}{lcccccccc}
\toprule
\multicolumn{1}{c}{Setting}     & $n$ & $N_t$  & $N_o$ & $N_{\text{ove}}$ & $N_{\text{sove}}$ & $N_s$ & $N_{ts}$ & $T$ \\ \hline
\multirow{3}{*}{Well-specified} & 2   & 10,000 & 50    & 500              & -                 & 200   & -      & 25  \\
                                & 5   & 10,000 & 50    & 1,000            & -                 & 200   & -      & 25  \\
                                & 10  & 10,000 & 50    & 1,000            & -                 & 200   & -      & 25  \\ \hline
Mis-specified                   & 5   & 10,000 & 50    & 1,000            & 5                 & 200   & -      & 25  \\ \hline
Real-world                      & 4   & 910    & 1     & 200              & -                 & -     & 114    & 10  \\ 
\bottomrule
\end{tabular}
\end{sc}
\end{small}
\end{center}
\end{table}

%% file: tables/mis.tex
\begin{table}[htbp]
\caption{Summary results of mis-specified setting.}
\label{tab:mis}
\centering
\renewcommand{\arraystretch}{1.2}
\begin{small}
\begin{sc}
\resizebox{\linewidth}{!}{%
\begin{tabular}{lcccccccccccc}
\toprule
\multicolumn{1}{c}{\multirow{2}{*}{Model}} & \multicolumn{4}{c}{Time (second)} & \multicolumn{4}{c}{MSE}               & \multicolumn{4}{c}{Relative Regret (\%)} \\ \cmidrule(lr){2-5} \cmidrule(lr){6-9} \cmidrule(lr){10-13}
\multicolumn{1}{c}{}                       & $(2,1)$  & $(3,1)$  & $(1,2)$  & $(1,3)$  & $(2,1)$   & $(3,1)$   & $(1,2)$   & $(1,3)$   & $(2,1)$   & $(3,1)$   & $(1,2)$  & $(1,3)$  \\ \hline
A-OVE         & 5.66 & 5.49 & 5.42 & 5.21 & -      & -      & -      & -      & \textbf{0.50}  & \textbf{0.80}  & 0.54 & \textbf{0.37} \\
ETO         & 4.47 & 4.53 & 4.39 & 4.27 & 28.79 & 31.46 & 26.00 & 25.97 & 0.58  & 0.98  & \textbf{0.47} & \textbf{0.37} \\ \hline
PTO-RNN     & \textbf{0.25} & \textbf{0.27} & \textbf{0.27} & \textbf{0.27} & \textbf{20.16} & \textbf{20.18} & \textbf{23.45} & \textbf{23.09} & 24.59  & 22.94  & 37.41 & 27.55 \\
PTO-LSTM    & 0.28 & 0.31 & 0.33 & 0.32 & 22.41 & 22.67 & 26.47 & 25.24 & 13.80  & 11.80  & 20.13 & 16.01 \\
PTO-RF      & 0.45 & 0.42 & 0.44 & 0.44 & 25.29 & 23.84 & 30.94 & 30.11 & 27.56  & 22.28  & 42.57 & 28.40 \\
PTO-XGB     & 3.69 & 3.19 & 3.06 & 2.86 & 28.13 & 26.54 & 35.46 & 34.14 & 44.06  & 40.93  & 66.01 & 47.60 \\ \hline
FPtP-RF     & 0.45 & 0.42 & 0.44 & 0.44 & 25.29 & 23.84 & 30.94 & 30.11 & 9.25 & 5.75  & 12.33 & 7.90 \\
FPtP-XGB    & 3.69 & 3.19 & 3.06 & 2.86 & 28.13 & 26.54 & 35.46 & 34.14 & 24.58  & 23.50  & 37.36 & 27.23 \\ 
\bottomrule
\end{tabular}}
\end{sc}
\end{small}
\end{table}

%% file: tables/aic.tex
\begin{table}[htbp]
\centering
\caption{$\mathrm{VARMA}$ model comparison based on information criteria}
\label{tab:varma_ic}
\renewcommand{\arraystretch}{1.2}
\begin{small}
\begin{sc}
\begin{tabular}{lccccc}
\toprule
\multicolumn{1}{c}{Model} & $p$ & $q$ & AIC & BIC & HQIC \\ \hline
$\mathrm{VARMA}(0,1)$ & 0 & 1 & 879.96 & 982.44 & 921.32  \\
$\mathrm{VARMA}(1,0)$ & 1 & 0 & 822.96 & \textbf{925.45} & 864.33  \\
$\mathrm{VARMA}(1,1)$ & 1 & 1 & \textbf{784.23} & 941.37 & \textbf{847.65}  \\
$\mathrm{VARMA}(2,0)$ & 2 & 0 & 817.26 & 974.41 & 880.69  \\
$\mathrm{VARMA}(2,1)$ & 2 & 1 & 798.92 & 1010.72 & 884.40  \\
$\mathrm{VARMA}(3,0)$ & 3 & 0 & 801.14 & 1012.94 & 886.62  \\
$\mathrm{VARMA}(3,1)$ & 3 & 1 & 805.14 & 1071.59 & 912.68 \\ \bottomrule
\end{tabular}
\end{sc}
\end{small}
\end{table}

%% file: algorithm/real.tex
\begin{algorithm}
\caption{Evaluation on Real Data}
\label{alg:real}
\begin{algorithmic}[1]
\REQUIRE
daily dollar volume $\bm{Y}^d$;
aggregation window size $\tau$;
training horizon $T_{\text{train}}$;
testing horizon $T_{\text{test}}$;
sample length $T$

\vspace{2mm}

\STATE $\mathcal{Y}_{\text{agg}} \gets \textsc{AggTimeSeries}(\bm{Y}^d)$
\RightComment{prepare the aggregated time series}

\STATE $\mathcal{M} \gets \textsc{TrainModel}(\mathcal{Y}_{\text{agg}})$
\RightComment{training the models}

\FOR{$\bm Y^{(i)}$ {\bfseries in} $\mathcal{Y}_{\text{agg}}$}
    \STATE $\bm Y^{(i)}_{\text{test}} \gets \bm Y^{(i)}_{T_{\text{train}}+1:T_{\text{train}}+T_{\text{test}}+1}$
\ENDFOR

\STATE $N_{ts} \gets T_{\text{test}} - (T+1) +1$
\RightComment{number of testing samples}

\FOR{$l=1$ {\bfseries to} $N_{ts}$}
    \FOR{$i=1$ {\bfseries to} $\tau$}
        \STATE $\bm{Y}_{1:T}^{(l,i)} \gets (\bm Y^{(i)}_{\text{test}})_{l:l+T-1}$
        \STATE $\bm{Y}_{T+1}^{(l,i)} \gets (\bm Y^{(i)}_{\text{test}})_{l+T}$
    \ENDFOR
    
    \STATE Define $\rho^{l}(\bm x) := \frac{1}{\tau}\sum_{i=1}^{\tau} \pi(\bm x, \bm{Y}_{T+1}^{(l,i)})$ 
    \RightComment{SAA estimation of $l$-th true cost}
    
    \FOR{$\cdot$ {\bfseries in} $\mathcal{M}$}
        \FOR{$i=1$ {\bfseries to} $\tau$}
            \STATE Execute model $\mathcal{M}_{\cdot}^{(i)}$ on $\bm{Y}_{1:T}^{(l,i)}$ and obtain decisions $\bm{x}_l^{(i)}$
            \STATE $\rho_\cdot^{l,i} \gets \rho^{l}(\bm{x}_l^{(i)})$ 
            \RightComment{SAA estimation of model performance}
        \ENDFOR
        \STATE $\sigma_\cdot^l \gets \frac{1}{\tau}\sum_{i=1}^{\tau} \rho_\cdot^{l,i}$
        \RightComment{average model performance}
    \ENDFOR

    \STATE $\bm x_*^l \gets \argmin_{\bm x} \rho^{l}(\bm x)$
    \RightComment{oracle decision}
    \STATE $\rho_*^l \gets \rho^{l}(\bm x_*^l)$
    \RightComment{oracle cost}
\ENDFOR

\vspace{2mm}
\STATE {\bfseries Return } $\left\{\frac{1}{N_{ts}}\sum_{l=1}^{N_{ts}}\frac{\sigma_\cdot^l-\rho_*^l}{\rho_*^l}\right\}_{\cdot\in\mathcal{M}}$
\end{algorithmic}
\end{algorithm}

%% file: algorithm/real_agg.tex
\begin{algorithm}
\caption{\textsc{AggTimeSeries}}
\label{alg:real_agg}
\begin{algorithmic}[1]
\REQUIRE
daily dollar volume $\bm{Y}^d$;
aggregation window size $\tau$;
target aggregated horizon $T_{\text{agg}}$;
mean of log-aggregated volumes $\bm{\mu}$

\vspace{2mm}
\FOR{$i = 1$ {\bfseries to} $\tau$}
    \FOR{$t = 1$ {\bfseries to} $T_{\text{agg}}$}
        \STATE $\bm Y^{(i)}_t \gets \log\left(\sum\limits_{s = \tau (t-1) + i}^{\tau t + i - 1} \bm Y_{s}\right) - \bm{\mu}$
        \RightComment{$\tau$-day volumes aggregation starting at shift $i$}
    \ENDFOR
\ENDFOR

\vspace{2mm}
\STATE {\bfseries Return } $\mathcal{Y}_{\text{agg}} = \{\bm Y^{(i)}\}_{i=1}^\tau$
\end{algorithmic}
\end{algorithm}

%% file: algorithm/real_train.tex
\begin{algorithm}
\caption{\textsc{TrainModel}}
\label{alg:real_train}
\begin{algorithmic}[1]
\REQUIRE
aggregated time series $\mathcal{Y}_{\text{agg}}$;
aggregation window size $\tau$;
training horizon $T_{\text{train}}$;
sample length $T$;
number of OVE parameters $N_{\text{ove}}$

\vspace{2mm}
\FOR{$\bm Y^{(i)}$ {\bfseries in} $\mathcal{Y}_{\text{agg}}$}
    \STATE $\bm Y^{(i)}_{\text{train}} \gets \bm Y^{(i)}_{1:T_{\text{train}}}$
    \STATE Obtain model $\mathcal{M}^{(i)}_{\cdot}$ by training on $\bm Y^{(i)}_{\text{train}}$
    \RightComment{applies to PTO, ETO, and FPtP}

    \STATE $\Xi^{(i)} \gets \emptyset$
    \RightComment{generate OVE parameter support}
    \FOR{$k=1$ {\bfseries to} $T_{\text{train}}-T+1$}
        \STATE $\bm Y^{(i,k)} \gets \bm Y^{(i)}_{k:k+T-1}$
        \STATE $\bm{\xi}^{(i,k)}$ is obtained by applying ETO to $\bm Y^{(i,k)}$
        \STATE $\Xi^{(i)} \gets \Xi^{(i)}\cup\bm{\xi}^{(i,k)}$
    \ENDFOR
\ENDFOR

\STATE $\Xi_{\scriptscriptstyle{\text{OVE}}} \gets \bigcup\limits_{i=1}^{\tau} \Xi^{(i)}$
\STATE $u \gets \textsc{GenPrior}(\Xi_{\scriptscriptstyle{\text{OVE}}})$
\STATE Select $N_{\text{ove}}$ candidates from $\Xi_{\scriptscriptstyle{\text{OVE}}}$ according to prior $u$ and obtain $\mathcal{M}_{\scriptscriptstyle{\text{OVE}}}$

\vspace{2mm}
\STATE {\bfseries Return } $\mathcal{M} = \left\{\{\mathcal{M}^{(i)}_{\scriptscriptstyle{\text{PTO}}}\}_{i=1}^\tau, \{\mathcal{M}^{(i)}_{\scriptscriptstyle{\text{ETO}}}\}_{i=1}^\tau, \{\mathcal{M}^{(i)}_{\scriptscriptstyle{\text{FPtP}}}\}_{i=1}^\tau, \mathcal{M}_{\scriptscriptstyle{\text{OVE}}}\right\}$
\end{algorithmic}
\end{algorithm}

%% file: algorithm/real_prior.tex
\begin{algorithm}
\caption{\textsc{GenPrior}}
\label{alg:real_prior}
\begin{algorithmic}[1]
\REQUIRE
Support $\Xi_{\scriptscriptstyle{\text{OVE}}}$;
Candidate bandwidths $\mathcal{B}$;
Number of folds $K$ for cross-validation

\vspace{2mm}
\STATE $N_t \gets |\Xi_{\scriptscriptstyle{\text{OVE}}}|$
\STATE $\Lambda_{\bm\xi} \gets \emptyset$
\RightComment{eigenvalue set of $\bm{\xi}$}

\vspace{1mm}
\FOR{$\bm{\xi}$ {\bfseries in} $\Xi_{\scriptscriptstyle{\text{OVE}}}$}
    \STATE $\bm{\lambda}_{\bm{\xi}} \gets (\bm{\lambda}_{\Theta},\bm{\lambda}_{\Phi}, \bm{\lambda}_{\Sigma_\epsilon})$
    \STATE $\Lambda_{\bm\xi} \gets \bm{\lambda}_{\bm{\xi}}$
\ENDFOR

\vspace{1mm}
\STATE Initialize $J^{*} \gets -\infty$, $b^{*} \gets 0$
\FOR{$b$ {\bfseries in} $\mathcal{B}$}
    \STATE Partition $\{1,\ldots,N_t\}$ into $K$ disjoint folds $\{\mathcal{D}_k\}_{k=1}^K$
    \FOR{$k = 1$ {\bfseries to} $K$}
        \STATE $\mathcal{D}_{-k} \gets \bigcup_{r \neq k} \mathcal{D}_r$
        \STATE 
        $\hat{u}_b^{(-k)}(\bm{\lambda})
        \gets
        \frac{1}{|\mathcal{D}_{-k}|\,b^{|\bm{\lambda}|}}
        \sum_{j \in \mathcal{D}_{-k}}
        K\!\left(
            \frac{\bm{\lambda} - \bm{\lambda}_j}{b}
        \right)$
        \STATE 
        $L_k(b)
        \gets
        \frac{1}{|\mathcal{D}_k|}
        \sum_{i \in \mathcal{D}_k}
        \log
        \hat{u}_b^{(-k)}(\bm{\lambda}_i)$
    \ENDFOR
    \STATE $\text{CV}(b)\gets \frac{1}{K} \sum_{k=1}^{K} L_k(b)$
    \IF{$\text{CV}(b) > J^\star$}
        \STATE $J^\star \gets \text{CV}(b)$
        \STATE $b^\star \gets b$
    \ENDIF
\ENDFOR

\vspace{1mm}

\STATE $d_{\bm{\xi}} \gets \hat{u}_{b^\star}(\lambda_{\bm{\xi}}),\quad \forall\, \bm{\xi} \in \mathcal{S}_o$
\STATE $u_{\bm{\xi}} \gets \frac{d_{\bm{\xi}}}{\sum_{\bm{\xi}' \in \mathcal{S}_o} d_{\bm{\xi}'}}, \quad \forall\, \bm{\xi} \in \mathcal{S}_o$

\vspace{2mm}
\STATE \textbf{Return } $\{u_{\bm{\xi}}\}_{\bm{\xi} \in \mathcal{S}_{o}}$
\end{algorithmic}
\end{algorithm}